\documentclass{article}

\PassOptionsToPackage{numbers, compress}{natbib}

\usepackage{todonotes}

\usepackage[final]{neurips_2025}

\usepackage[utf8]{inputenc} 
\usepackage[T1]{fontenc}    
\usepackage{url}            
\usepackage{booktabs}       
\usepackage{amsfonts}       
\usepackage{pifont}
\usepackage{nicefrac}       
\usepackage{microtype}      
\usepackage{xcolor}         
\usepackage{enumitem}
\usepackage{xcolor}
\usepackage{multirow}
\usepackage{graphicx}
\usepackage{wrapfig}
\usepackage{caption}
\usepackage{algorithm}
\usepackage{algpseudocode}
\usepackage{amsmath, amssymb}
\usepackage{mathtools}
\usepackage{amsthm}
\usepackage{wrapfig}
\usepackage{tikz}

\usepackage{booktabs,tabularx,array}
\newcolumntype{L}[1]{>{\raggedright\arraybackslash}p{#1}}

\definecolor{checkmark}{HTML}{008000}
\definecolor{cross}{HTML}{FF0000}

\usepackage[normalem]{ulem}
\usepackage{xcolor}
\usepackage[
    colorlinks=true,
    citecolor=green
]{hyperref}

\usepackage[capitalize,noabbrev]{cleveref}

\theoremstyle{plain}
\newtheorem{theorem}{Theorem}[section]
\newtheorem{proposition}[theorem]{Proposition}
\newtheorem{lemma}[theorem]{Lemma}

\theoremstyle{definition}

\newtheorem{remark}[theorem]{Remark}

\title{Spectrum Matching: a Unified Perspective for Superior Diffusability in Latent Diffusion}

\author{%
  Mang Ning \\ 
  Utrecht University \\
  m.ning@uu.nl \\
  \And 
  Mingxiao Li  \\
  KU Leuven \\
  mingxiao.li@kuleuven.be \\ 
  \And 
  Le Zhang\\
  Mila \\
  le.zhang@mila.quebec \\
  \And 
  Lanmiao Liu\\
  Utrecht University \\
  Max Planck Institute for Psycholinguistics \\
  \And 
  Matthew B.\ Blaschko\\
  KU Leuven \\
   matthew.blaschko@kuleuven.be\\
  \And 
  Albert Ali Salah\\
  Utrecht University \\
   \\
  \And 
  Itir Onal Ertugrul\\
  Utrecht University\\
  i.onalertugrul@uu.nl \\
   \\
}

\begin{document}

\maketitle

\begin{abstract}
In this paper, we study the diffusability (learnability) of variational autoencoders (VAE) in latent diffusion. First, we show that pixel-space diffusion trained with an MSE objective is inherently biased toward learning low and mid spatial frequencies, and that the power-law power spectral density (PSD) of natural images makes this bias perceptually beneficial. Motivated by this result, we propose the \emph{Spectrum Matching Hypothesis}: latents with superior diffusability should (i) follow a flattened power-law PSD (\emph{Encoding Spectrum Matching}, ESM) and (ii) preserve frequency-to-frequency semantic correspondence through the decoder (\emph{Decoding Spectrum Matching}, DSM). In practice, we apply ESM by matching the PSD between images and latents, and DSM via shared spectral masking with frequency-aligned reconstruction. Importantly, Spectrum Matching provides a unified view that clarifies prior observations of over-noisy or over-smoothed latents, and interprets several recent methods as special cases (e.g., VA-VAE, EQ-VAE). Experiments suggest that Spectrum Matching yields superior diffusion generation on CelebA and ImageNet datasets, and outperforms prior approaches. Finally, we extend the spectral view to representation alignment (REPA): we show that the directional spectral energy of the target representation is crucial for REPA, and propose a DoG-based method to further improve the performance of REPA. Our code is available \url{https://github.com/forever208/SpectrumMatching}.
  
\end{abstract}

\section{Introduction}
\label{sec: Introduction}

Latent diffusion models have become a main paradigm for high-resolution image generation \cite{rombach2022high} and video generation \cite{wan2025wan,wu2025hunyuanvideo,ma2025step}, combining the expressive power of diffusion models \cite{ho2020denoising,songscore,peebles2023scalable,liuflow} with the computational efficiency of operating in a compressed latent space. In this two-stage framework, a first-stage Variational Autoencoder (VAE) maps images to latents, and a second-stage diffusion model learns to generate these latents, which are then decoded back to RGB space. This design underpins many modern text-to-image and unconditional generation systems \cite{labs2025flux,wu2025qwen,cai2025z}, enabling high-resolution synthesis with manageable training and inference cost.

Despite their success, latent diffusion models exhibit a practically important problem: \emph{better reconstructions do not necessarily imply better generation quality}. Recent studies show that reconstruction-focused improvements to the VAE can yield limited or even inconsistent gains in downstream diffusion quality \cite{hansenlearnings}, motivating a shift from reconstruction fidelity to the \emph{diffusability} (learnability) of the latent representation \cite{skorokhodov2025improving}. This perspective has inspired a growing body of work that regularizes the latent space to make it easier for diffusion to model. For example, prior methods suggest that non-uniform (biased) latent spectra can be beneficial \cite{liu2025delving}, aligning latents to pretrained foundation-model features improves diffusion performance \cite{yao2025reconstruction,fan2025prism}, and truncating high-frequency latent components via downsampling or enforcing equivariance to spatial transforms can improve generation \cite{skorokhodov2025improving,kouzelis2025eq}. While these findings are compelling, they are often presented as separate observations or heuristics, leaving open a central question: \emph{What properties characterize a diffusion-friendly latent space?}

In this work, we propose a unifying answer through the lens of the latent spectrum. We first theoretically demonstrate that pixel-space diffusion trained with an MSE objective is inherently biased toward learning low and mid spatial frequencies, and that the power-law power spectral density (PSD) of natural images makes this bias perceptually beneficial. Motivated by this result, we introduce the \textbf{Spectrum Matching Hypothesis}: latents with superior diffusability should (i) follow a \emph{flattened power-law} PSD (\emph{Encoding Spectrum Matching}, ESM), and (ii) preserve \emph{frequency-to-frequency semantic correspondence} through the decoder (\emph{Decoding Spectrum Matching}, DSM). This hypothesis not only naturally yields practical algorithms---ESM via PSD matching between images and latents, and DSM via shared spectral masking with frequency-aligned reconstruction---but also provides a unified interpretation of prior observations such as over-noisy (over-whitened) and over-smoothed latents, and re-casts several recent methods as special cases of ESM/DSM.

Beyond VAE latents, we further show that the spectrum view can clarify representation alignment (REPA) \cite{yu2025repa}, a recently successful paradigm for accelerating diffusion training with feature-based alignment. We demonstrate that the proposed RMS Spatial Contrast (RMSC) metric in iREPA \cite{singh2025matters} is equivalent to \textbf{directional spectral energy}, suggesting that the spectral energy of the direction field is a key property of effective target representations. Moreover, we propose a Difference-of-Gaussians (DoG) band-pass preprocessing that improves REPA generation quality.

To summarize, our contributions are fourfold:
\begin{itemize}
    \item We theoretically show that pixel-space diffusion with an MSE objective induces an implicit low-/mid-frequency learning bias, and the power-law PSD of natural images makes this bias beneficial for modeling the perceptual semantics of images.
    
    \item We propose the Spectrum Matching Hypothesis for latent diffusion, which unifies prior methods and empirical observations.
    
    \item We instantiate ESM via PSD matching and DSM via shared spectral masking and frequency-aligned reconstruction, leading to superior latent diffusability.
    
    \item We extend the spectrum view to REPA by connecting RMSC to directional spectral energy, and introduce a DoG-based method that improves REPA and iREPA.
\end{itemize}

\section{Related Work}
\label{sec: Related Work}

\subsection{VAE in Latent Diffusion}

The two-stage latent diffusion models (LDM) were introduced in \cite{rombach2022high} for high-resolution image generation, and the VAE used for the first-stage compression has been widely studied for better reconstruction or generation. On the reconstruction side, the SDXL approach \cite{podellsdxl} showed that larger batch sizes and exponential moving average (EMA) updates improve reconstruction quality. SD3-VAE \cite{esser2024scaling} and Flux-VAE \cite{labs2025flux} further boosted reconstruction quality by increasing latent channel capacity. To achieve higher compression ratios, DC-AE \cite{chendeep} introduced a residual module together with a multi-phase training strategy. Other lines of work explicitly decoupled the reconstruction of low and high-frequency components to better reconstruct the fine details \cite{medi2025missing,fan2025prism}. Beyond reconstruction, several methods aim to improve downstream diffusion performance by regularizing the VAE. A common strategy is to inject perturbations into the latent space during VAE training \cite{chen2025dc,team2025nextstep,zheng2025diffusion}, which helps by mitigating exposure bias in diffusion models \cite{ning2023input,ning2023elucidating}. More recently, researchers have found that a lossy or weak encoder is also feasible for diffusion modeling by enhancing the capability of the decoder \cite{zheng2025diffusion,zhaoepsilon}.

\subsection{Diffusability of the Latent Representations}
\label{subsec: Diffusability of the Latent Representations}

A VAE with strong reconstruction fidelity does not necessarily yield better downstream diffusion performance \cite{hansenlearnings}. This empirical observation has motivated recent work to study the \emph{diffusability} of the latent space. For instance, \cite{liu2025delving} argues that latents with a \emph{biased} (non-uniform) spectrum are preferable for diffusion, highlighting the importance of latent spectral structure. Another line of work improves diffusability by aligning VAE latents with representations from foundation models. VA-VAE \cite{yao2025reconstruction} and UAE \cite{fan2025prism} reveal that matching latents to features such as DINOv2 \cite{oquab2024dinov2} can substantially enhance diffusion quality. As we discuss in Section~\ref{subsec: Spectrum Matching Unifies Prior Observations and Approaches}, these feature-alignment approaches can be interpreted through the lens of Spectrum Matching, where the pretrained representation implicitly defines a desirable target spectrum. In addition, Scale Equivariance \cite{skorokhodov2025improving} reports that standard VAEs often exhibit an abnormally strong high-frequency component in the latent space, and proposes to truncate these frequencies via latent downsampling. EQ-VAE \cite{kouzelis2025eq} further enforces equivariance of latents under spatial transformations, which also improves diffusion performance. In Section~\ref{subsec: Spectrum Matching Unifies Prior Observations and Approaches}, we show that these methods can be naturally categorized within the Spectrum Matching family, as special cases of enforcing frequency-consistent latent structure and decoding.

\section{Spectrum Matching}
\label{sec: Spectrum Matching}

In this section, we first introduce Proposition \ref{proposition:1}, which states that pixel diffusion training induces low-frequency bias and power-law PSD makes this bias beneficial for image perceptual quality. To make the latent diffusion enjoy the spectral bias benefit, we propose Spectrum Matching Hypothesis for the latent space.

\subsection{Power-Law PSD Matches Pixel Diffusion Spectral Bias}
\label{subsec: Power-Law PSD Matches Diffusion Spectral Bias}

\begin{proposition}[Power-law PSD aligns diffusion training objective with perceptually dominant structure]

Let $\pmb{x}_0$ be a random natural image and $y_0(\omega)\triangleq \mathcal{F}(\pmb{x}_0)(\omega)$ be its Fourier coefficients with power spectral density
$S(\omega)\triangleq \mathbb{E}\!\left[|y_0(\omega)|^2\right]=K|\omega|^{-\alpha}$.
The diffusion forward process at timestep $t$:
\[
\pmb{x}_t=\sqrt{\bar\alpha_t}\,\pmb{x}_0+\sqrt{1-\bar\alpha_t}\,\pmb{\varepsilon},
\qquad \pmb{\varepsilon}\sim\mathcal{N}(\pmb{0},\pmb{I}),
\]

implies the diffusion in the Fourier domain
\[
y_t(\omega)=\sqrt{\bar\alpha_t}\,y_0(\omega)+\sqrt{1-\bar\alpha_t}\,\eta(\omega),
\]
with spectrally flat Gaussian noise $\eta(\omega)$. Let $\hat{\pmb{x}}_{\theta}(\pmb{x}_t,t)$ be the denoiser to be trained by MSE in pixel space, and define the per-frequency timestep Signal-Noise-Ratio as $\mathrm{SNR}_t(\omega)\triangleq \frac{\bar\alpha_t\,S(\omega)}{1-\bar\alpha_t}$, under a standard local Gaussian approximation for $(y_0(\omega),y_t(\omega))$,
the learnable signal power at frequency $\omega$ is proportional to

\noindent
\begin{equation}
\label{eq: proposition1}
G_t(\omega)\;\triangleq\; S(\omega)\cdot\frac{\mathrm{SNR}_t(\omega)}{1+\mathrm{SNR}_t(\omega)}..
\end{equation}
\noindent

Consequently, for natural images with $S(\omega)=K|\omega|^{-\alpha}$, $G_t(\omega)$
decays rapidly with $|\omega|$, so optimization is inherently biased toward fitting low-frequency components
of $\pmb{x}_0$ (proof in Appendix~\ref{append: Proof of Proposition}).
\label{proposition:1}
\end{proposition}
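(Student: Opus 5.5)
The plan is to reduce the pixel-space MSE objective to a sum of independent scalar estimation problems, one per frequency, and then read off how much of $y_0(\omega)$ an optimal denoiser can actually recover.

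\emph{Step 1: move the loss to the Fourier domain.} Use a unitary DFT $\mathcal{F}$. By Parseval,
\[
\mathbb{E}\|\hat{\pmb{x}}_\theta(\pmb{x}_t,t)-\pmb{x}_0\|^2=\sum_\omega \mathbb{E}|\hat y_\theta(\omega)-y_0(\omega)|^2, \qquad \hat y_\theta\triangleq\mathcal{F}(\hat{\pmb{x}}_\theta).
\]
Unitarity also shows that $\eta=\mathcal{F}(\pmb{\varepsilon})$ is again isotropic (complex) Gaussian with unit variance at every frequency, which justifies the Fourier-domain forward process in the statement with spectrally flat $\eta$. The pixel MSE is therefore a sum of per-frequency MSEs, all with equal weight.

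\emph{Step 2: solve each per-frequency problem under the Gaussian approximation.} Take $(y_0(\omega),y_t(\omega))$ jointly Gaussian with zero mean, $\mathrm{Var}\,y_0=S(\omega)$, and $\mathrm{Var}\,y_t=\bar\alpha_t S+(1-\bar\alpha_t)$. For stationary images, distinct frequencies are uncorrelated, so under this approximation they are independent. The loss-minimizing denoiser is then the Wiener filter
\[
\hat y(\omega)=\frac{\sqrt{\bar\alpha_t}\,S}{\bar\alpha_t S+1-\bar\alpha_t}\,y_t(\omega).
\]
Decompose $\mathbb{E}|y_0|^2$ into the explained part $\mathbb{E}|\hat y|^2$ plus the residual MMSE. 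A direct computation gives
\[
\mathbb{E}|\hat y|^2=\frac{\bar\alpha_t S^2}{\bar\alpha_t S+1-\bar\alpha_t}=S\cdot\frac{\mathrm{SNR}_t}{1+\mathrm{SNR}_t}=G_t(\omega),
\qquad
\mathrm{MMSE}=\frac{S}{1+\mathrm{SNR}_t}.
\]
I interpret $G_t(\omega)$ as the learnable signal power, i.e.\ the loss reduction achievable at frequency $\omega$ relative to predicting zero. Since the per-frequency losses enter the total with equal weight, the gradient signal devoted to frequency $\omega$ scales with $G_t(\omega)$.

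\emph{Step 3: show that $G_t$ decays.} Substitute $S=K|\omega|^{-\alpha}$. The function $S\mapsto S\cdot\frac{cS}{1+cS}$ is increasing in $S$, so $G_t$ is decreasing in $|\omega|$. The two regimes are:
\begin{itemize}
\item Where $\mathrm{SNR}_t\gg1$ (low $|\omega|$), $G_t\approx K|\omega|^{-\alpha}$.
\item Where $\mathrm{SNR}_t\ll1$ (high $|\omega|$), $G_t\approx \frac{\bar\alpha_t}{1-\bar\alpha_t}K^2|\omega|^{-2\alpha}$, so the decay rate doubles.
\end{itemize}
The crossover frequency $\omega_t^*=(\bar\alpha_t K/(1-\bar\alpha_t))^{1/\alpha}$ moves downward as $t$ grows. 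Averaged over timesteps, optimization is therefore concentrated on low and mid frequencies.

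\emph{Main obstacle.} The hard part is justifying the local Gaussian, decorrelated-frequency approximation and the identification of ``learnable power'' with the explained variance. For real images the posterior mean is not linear. I would handle this by stating it explicitly as the modeling assumption, which the proposition already grants, and noting that the Wiener risk upper-bounds the true MMSE. The rest is algebra.
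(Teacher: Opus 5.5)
Your proposal is correct and matches the paper's proof step for step: Parseval with a unitary DFT splits the pixel MSE into per-frequency terms, the conditional mean is linear (Wiener/LMMSE) under the Gaussian approximation, and the second-moment decomposition $\mathbb{E}|Y_0|^2=\mathbb{E}|\mathbb{E}[Y_0\mid Y_t]|^2+\mathrm{MMSE}$ identifies the explained power $\bar\alpha_t S^2/(\bar\alpha_t S+1-\bar\alpha_t)=G_t(\omega)$. Two parts of your write-up supply justification the paper only asserts: Step 3 (monotonicity of $S\mapsto cS^2/(1+cS)$, the $|\omega|^{-\alpha}$ versus $|\omega|^{-2\alpha}$ regimes, and the crossover frequency), and your use of cross-frequency decorrelation to justify treating each frequency in isolation (up to the harmless conjugate pairing $y(-\omega)=\overline{y(\omega)}$).
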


In essence, Proposition \ref{proposition:1} presents that when training diffusion with an MSE loss in pixel space, we can rewrite the loss as a sum of independent per-frequency MSE losses in the Fourier domain. Then, for each frequency $\omega$, \emph{the maximum achievable MSE reduction depends on the frequency energy $S(\omega)$ and the diffusion SNR at timestep $t$}. Because 
$G_t(\omega)$ decays quickly with $|\omega|$ for power-law spectra, diffusion training allocates most of its modeling capacity and gradient signal to low and mid spatial frequencies. These frequency bands dominate the energy of natural images and encode the global, semantically meaningful structure. This low-frequency learning bias shown by Proposition \ref{proposition:1} also explains the finding of smooth diffusion scores in \cite{bonnaire2025diffusion}. High-frequency components, by contrast, are both low-energy and often noise-dominated across most timesteps, so their detailed statistics are learned more weakly and can be approximated without substantially affecting perceived image quality, which explains the phenomenon observed in \cite{falck2025fourier} where an improved modeling of high frequencies does not lead to better generated images.

\subsection{Spectrum Matching Hypothesis}
\label{subsec: Spectrum Matching Hypothesis}

Motivated by Proposition ~\ref{proposition:1}, which demonstrates that pixel-space diffusion training induces an implicit low-frequency bias that aligns well with the power-law PSD of natural images, we propose the \emph{Spectrum Matching Hypothesis} for latent diffusion. Given a VAE consisting of an encoder $E(.)$ and a decoder $D(.)$, we hypothesize that the latent $\pmb{z} = E(\pmb{x})$ for superior diffusability satisfies:

(i) \textbf{Encoding Spectrum Matching (ESM)}, where the latent spectrum of $\pmb{z} = E(\pmb{x})$ follows an approximately power-law PSD $S_{\pmb{z}}(\omega)\propto |\omega|^{-(\alpha-\delta)}$ with $\delta>0$ flattening the natural-image spectrum $S_{\pmb{x}}(\omega)\propto |\omega|^{-\alpha}$ (the flattening tendency is detailed by Lemma \ref{lemma:1} in the Appendix). In essence, ESM constrains the shape of the latent spectrum.

(ii) \textbf{Decoding Spectrum Matching (DSM)}, where the decoder $D(.)$ should be frequency-aligned such that latent frequency bands can be decoded to corresponding image frequency bands. For example, the low-frequency components of the latent $\pmb{z}$ should contain the low-frequency infomation of the input image $\pmb{x}$. Essentially, DSM constrains the semantic meaning of the latent spectrum.

If a VAE satisfies ESM and DSM, the latent diffusion can inherit the same advantageous alignment properties as pixel diffusion on natural images: MSE denoising objectives emphasize the most learnable and perceptually salient semantics (encoded by the low-frequency band). Also, Spectrum Matching preserves the coarse-to-fine (spectral autoregressive \cite{dieleman2024spectral}) generation order: latent diffusion can first model low-frequency latent structure and progressively refine higher-frequency details of the RGB image.

\subsection{Algorithms of ESM and DSM}
\label{subsec: Practical Implementation of ESM and DSM}

In order to apply the Spectrum Matching regularization in VAE, we propose practical algorithms for ESM and DSM, respectively. Figure~\ref{fig: pipeline} illustrates how these two methods are integrated into a standard VAE training used for latent diffusion training.

\begin{figure}[htb]
\vskip -0.1in
\begin{center}
\centerline{\includegraphics[width=0.9\columnwidth]{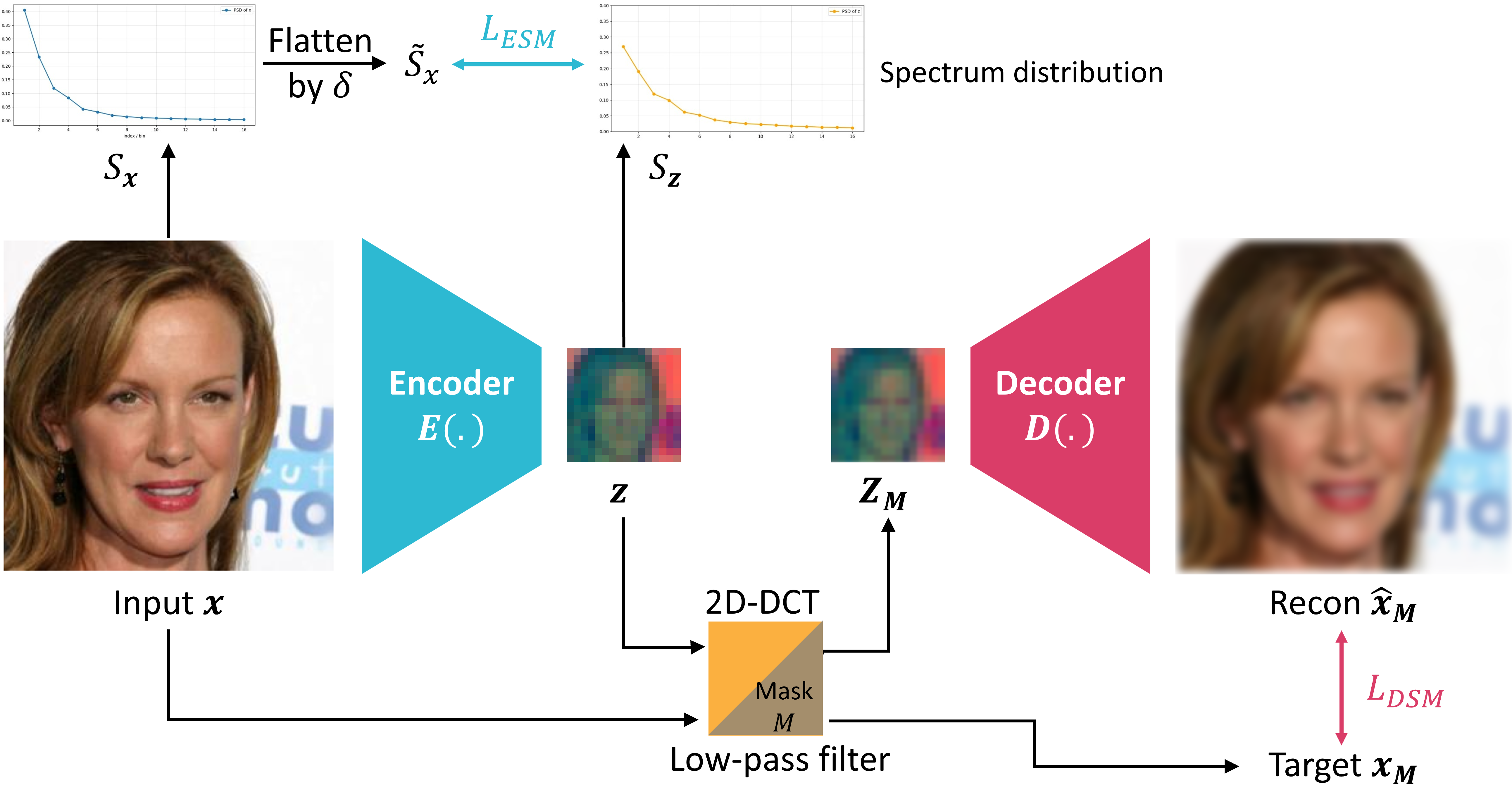}}
\caption{Diagram of ESM and DSM in a typical VAE for latent diffusion. 
}
\label{fig: pipeline}
\end{center}
\vskip -0.2in
\end{figure}

\paragraph{Encoding Spectrum Matching (ESM).} ESM regularizes the \emph{encoder-side} latent spectrum to make it more learnable by diffusion training. As shown in Algorithm~\ref{alg:esm}, given an input image $\pmb{x}$, we first obtain its latent representation $\pmb{z}=E(\pmb{x})$. We then compute a spectral descriptor by PSD for both the image and the latent, denoted by $S_{\pmb{x}}$ and $S_{\pmb{z}}$, respectively. According to the spectrum flattening tendency detailed in Lemma \ref{lemma:1}, we construct a flattened image-side target spectrum $\tilde{S}_{\pmb{x}}=\mathrm{Flatten}(S_{\pmb{x}};\delta)$ where $\delta>0$ controls the strength of flattening. This reflects our intuition that a latent space with superior diffusability should follow a power-law PSD while the positive $\delta$ encourages the latent $\pmb{z}$ to maintain as much information of $\pmb{x}$ as possible by maximizing its information entropy. Finally, both spectra are normalized into valid distributions $\hat{S}_{\pmb{x}}$ and $\hat{S}_{\pmb{z}}$, and the ESM loss is defined as a KL divergence $\mathcal{L}_{\mathrm{ESM}}
= \mathrm{KL}\!\left(\hat{S}_{\pmb{x}} \,\|\, \hat{S}_{\pmb{z}}\right)$.

In practice, researchers often use the following compound loss to train a VAE:

\noindent
\begin{equation}
\label{eq: SDVAE loss}
\mathcal{L}_{\mathrm{SD-VAE}}(\pmb{x})
= \mathcal{L}_{1}(\pmb{x}, \hat{\pmb{x}}) + 
\lambda_1 \mathcal{L}_{LPIPS}(\pmb{x}, \hat{\pmb{x}}) + \lambda_2 \mathcal{L}_{GAN}(\pmb{x}, \hat{\pmb{x}}) + \lambda_3 \mathcal{L}_{\mathrm{KL}}.
\end{equation}
\noindent

In the case of ESM, we integrate the loss $\mathcal{L}_{\mathrm{ESM}}$ into the VAE losses by replacing the $\mathcal{L}_{\mathrm{KL}}$ term:

\noindent
\begin{equation}
\label{eq: VAE-ESM loss}
\mathcal{L}_{\mathrm{ESM-AE}}(\pmb{x})
= \mathcal{L}_{1}(\pmb{x}, \hat{\pmb{x}}) + 
\lambda_1 \mathcal{L}_{LPIPS}(\pmb{x}, \hat{\pmb{x}}) + \lambda_2 \mathcal{L}_{GAN}(\pmb{x}, \hat{\pmb{x}}) + \beta \mathcal{L}_{\mathrm{ESM}}.
\end{equation}
\noindent

where $\beta$ is a hyperparameter for the ESM loss. We remove the Gaussian KL loss term (i.e., the variational term is gone) when using ESM or DSM regularization because we find that ESM or DSM can achieve a similar Gaussian regularization effect in the latent space. Note that the computational cost of $\mathcal{L}_{\mathrm{ESM}}$ is negligible, so that ESM is an efficient regularization method.

\noindent
\begin{algorithm}[t]
\caption{Encoding Spectrum Matching (ESM)}
\label{alg:esm}
\begin{algorithmic}
\Require Input image $\pmb{x}$, encoder $E(\cdot)$, flattening factor $\delta \ge 0$

\State 1: Compute image latent: $\pmb{z} \leftarrow E(\pmb{x})$

\State 2: Compute image spectrum PSD and latent spectrum PSD:
\State \hspace{2em} $S_{\pmb{x}} \leftarrow PSD(\pmb{x})$, \quad  $S_{\pmb{z}} \leftarrow PSD(\pmb{z}) $ 

\State 3: Flatten the image-side target spectrum:
\State \hspace{2em} $\tilde{S}_{\pmb{x}} \leftarrow \mathrm{Flatten}(S_{\pmb{x}};\delta)$

\State 4: Normalize spectra into valid distributions:
\State \hspace{2em} $\hat{S}_{\pmb{x}} \leftarrow \mathrm{Normalize}(\tilde{S}_{\pmb{x}})$, \quad
$\hat{S}_{\pmb{z}} \leftarrow \mathrm{Normalize}(S_{\pmb{z}})$

\State 5: Match latent PSD to target PSD:
\State \hspace{2em} $\mathcal{L}_{\mathrm{ESM}} \leftarrow \mathrm{KL \ divergence}(\hat{S}_{\pmb{x}}, \hat{S}_{\pmb{z}})$

\State \Return $\mathcal{L}_{\mathrm{ESM}}$
\end{algorithmic}
\end{algorithm}
\noindent

\paragraph{Decoding Spectrum Matching (DSM).}
While ESM shapes the latent spectrum by regularizing the encoder $E(.)$, DSM enforces \emph{decoder-side frequency alignment} between the latent and image. As shown in Algorithm~\ref{alg:dsm}, we again start from the latent $\pmb{z}=E(\pmb{x})$, and then sample a shared frequency mask $M\sim\mathcal{M}$. In practice, we apply the triangular mask $M$ (with shape \tikz\draw[fill=black] (0,0) -- (0.25,0) -- (0.25,0.25) -- cycle;) in the 2D-DCT block \cite{ahmed1974discrete} where the high frequencies are on the bottom-right corner. Therefore, the mask acts as a low-pass filter, preserving only a subset of low-frequency components and suppressing high-frequency components. We then apply the \emph{same} spectral mask to both the image $\pmb{x}$ and the latent $\pmb{z}$:
\[
\pmb{x}^{M}=\mathrm{SpectralFilter}(\pmb{x},M), \qquad
\pmb{z}^{M}=\mathrm{SpectralFilter}(\pmb{z},M).
\]
Finally, the decoder is trained to reconstruct the masked image from the masked latent $\hat{\pmb{x}}^{M}=D(\pmb{z}^{M})$, and the DSM loss is defined as an $\ell_1$ reconstruction objective $\mathcal{L}_{\mathrm{DSM}}=\|\hat{\pmb{x}}^{M}-\pmb{x}^{M}\|_1.$ In practice, we use the compound loss below to train an Autoencoder:

\noindent
\begin{equation}
\label{eq: VAE-DSM loss}
\mathcal{L}_{\mathrm{DSM-AE}}(\pmb{x})
= \mathcal{L}_{\mathrm{DSM}} + 
\lambda_1 \mathcal{L}_{LPIPS}(\pmb{x}^{M}, 
\hat{\pmb{x}}^{M}) + \lambda_2 \mathcal{L}_{GAN}(\pmb{x}^{M}, \hat{\pmb{x}}^{M}).
\end{equation}
\noindent

Note that during training, the sampled mask $M$ may also be empty (i.e., no filtering), in which case all frequency components of $\pmb{x}$ and $\pmb{z}$ are preserved. Under this setting, Equation \ref{eq: VAE-DSM loss} reduces to the standard VAE reconstruction loss. In Section \ref{subsec: Results of ESM and DSM in VAE}, we will show that both ESM and DSM can achieve improved diffusion results compared with standard VAEs.

\noindent
\begin{algorithm}[t]
\caption{Decoding Spectrum Matching (DSM)}
\label{alg:dsm}
\begin{algorithmic}
\Require Input image $\pmb{x}$, encoder $E(\cdot)$, decoder $D(\cdot)$, frequency mask family $\mathcal{M}$

\State 1: Compute image latent: $\pmb{z} \leftarrow E(\pmb{x})$

\State 2: Sample a shared frequency mask: $M \sim \mathcal{M}$
\State \hspace{2em} \Comment{$M$ acts as a low-pass filter and removes high frequencies}

\State 3: Apply the same spectral mask to image and latent:
\State \hspace{2em} $\pmb{x}_{M} \leftarrow \mathrm{SpectralFilter}(\pmb{x}, M)$, \quad
 $\pmb{z}_{M} \leftarrow \mathrm{SpectralFilter}(\pmb{z}, M)$

\State 4: Reconstruct masked image from masked latent:
\State \hspace{2em} $\hat{\pmb{x}}_{M} \leftarrow D(\pmb{z}_{M})$

\State 5: Enforce frequency-aligned decoding:
\State \hspace{2em} $\mathcal{L}_{\mathrm{DSM}} \leftarrow \|\hat{\pmb{x}}_{M} - \pmb{x}_{M}\|_1$

\State \Return $\mathcal{L}_{\mathrm{DSM}}$
\end{algorithmic}
\end{algorithm}
\noindent

\subsection{Spectrum Matching Unifies Prior Observations and Approaches}
\label{subsec: Spectrum Matching Unifies Prior Observations and Approaches}

Beyond the empirical performance gains, a key advantage of Spectrum Matching is that it provides a unified lens for understanding prior observations and methods in the recent VAE literature.

\paragraph{Explaining over-noisy or over-smoothed latents} Several works report that the latent space of SD-VAE contains overly strong high-frequency components \cite{skorokhodov2025improving}, and that these high-frequency bands can even carry substantial low-frequency semantic information from the RGB image \cite{lai2025toward}. This is undesirable for diffusion modeling: as discussed in Section~\ref{subsec: Power-Law PSD Matches Diffusion Spectral Bias}, diffusion training is naturally biased toward low and mid frequencies, while high-frequency components are harder to model and often violate the posterior Gaussian assumption \cite{falck2025fourier}. Through the lens of Spectrum Matching, this phenomenon becomes principled. The flattening tendency in the latent spectrum can be interpreted as a result of entropy maximization during compression (see Lemma~\ref{lemma:1}); however, a standard VAE may overuse this mechanism and shift too much information into high-frequency bands. ESM directly counteracts this behavior by matching the latent PSD to a \emph{flattened but still power-law} target, while DSM further prevents semantic drift across frequency bands by enforcing frequency-consistent decoding. We present in Section~\ref{subsec: Results of ESM and DSM in VAE} that both ESM and DSM can solve this excessive-high-frequency issue and improve downstream diffusion quality. As pointed in \cite{liu2025delving,leng2025repa}, the opposite extreme is also problematic: an overly smooth latent space is not ideal for diffusion modeling either. If the latent over-concentrates energy in low frequencies, the representation becomes too lossy and fails to preserve sufficient image detail. In our framework, ESM avoids both extremes---over-whitening and over-smoothing---by explicitly regularizing the latent toward a flattened power-law PSD.

\paragraph{Unifying recent methods as special cases.}

Spectrum Matching also subsumes several recent methods as special cases or partial realizations of ESM/DSM. First, UAE \cite{fan2025prism} improves reconstruction quality by aligning low-frequency components of the latent $\pmb{z}$ with low-frequency components of DINOv2 features \cite{oquab2024dinov2}. In our analysis (Appendix~\ref{append: Analysis of Low Frequency Alignment in UAE}), DINOv2 features exhibit an approximately power-law PSD $S(\omega)\propto |\omega|^{-(\alpha-\delta)}$ with $\delta \approx 1.0$ relative to the input image spectrum. Therefore, UAE can be interpreted as a specific instance of ESM, where the target spectrum comes from DINOv2. Similarlly, VA-VAE \cite{yao2025reconstruction} applies a linear transform on the latents to match the DINOv2 features. As shown in Appendix~\ref{append: Analysis of Low Frequency Alignment in UAE}, the resulting latent representations in VA-VAE also approximately follow a power-law PSD. Second, Scale Equivariance \cite{skorokhodov2025improving} and EQ-VAE \cite{kouzelis2025eq} show that applying linear spatial transformations (e.g., downsampling) to the latent and requiring the decoder to reconstruct the correspondingly transformed image improves diffusability. In our framework, these methods can be interpreted as special cases of DSM: downsampling is equivalent to applying a particular low-pass spectral mask $M$ according to \cite{ning2025dctdiff}, and the corresponding reconstruction constraint is precisely a frequency-aligned decoding objective (detailed in Appendix~\ref{append: Scale Equivariance and EQ-VAE are special cases of DSM} ). In Section \ref{subsec: Results of ESM and DSM in VAE}, we show that DSM, as a generalized version of equivariance regularization,
outperforms Scale Equivariance in terms of generation quality.

\noindent
\begin{table}[htb]
\caption{
Spectrum Matching Unifies Prior Approaches and Empirical Observations.}
\label{tab: Spectrum Matching Unifies Prior Observations and Approaches}
\centering
\small
\begin{tabularx}{\linewidth}{@{} L{0.18\linewidth} L{0.47\linewidth} L{0.30\linewidth} @{}}
\toprule
\multicolumn{1}{c}{Prior research} & Observations / Method & Spectrum Matching View 
\\
\midrule
Scale Equivariance \cite{skorokhodov2025improving} &
Observation: standard VAE training yields a prominent high-frequency component in the latent space, deviating significantly from the spectral distribution of RGB signals. &
Latent space should follow a flattened power-law PSD \\
\midrule
FreqWarm \cite{lai2025toward} &
Observation: in the latent space of a VAE, too much image information is encoded into the high-frequency band, which is in contrast to the case in the RGB space. & Latent space should follow a flattened power-law PSD
\\
\midrule
SSVAE \cite{liu2025delving} &
Observation: biased (rather than uniform) latent spectra lead to improved diffusability. & ESM ensures that the  latent has a power-law spectrum 
\\
\midrule
REPA-E \cite{leng2025repa} & Observation: over-smoothed latent space leads to bad diffusion modeling & ESM ensures that the  latent has a power-law spectrum
\\
\midrule
UAE \cite{fan2025prism} &
Method: aligns the low frequencies of DINOv2 features with the low frequencies of the UAE latent. & Partial realizations of ESM
\\
\midrule
VA-VAE \cite{yao2025reconstruction} &
Method: aligns the DINOv2 features with the VA-VAE latent after a learnable linear matrix. & Partial realizations of ESM
\\
\midrule
Scale Equivariance \cite{skorokhodov2025improving} &
Method: downsamples the latent $z$ and image $x$, then forces the decoder to reconstruct the downsampled $x$. &
A special case of DSM \\
\midrule
EQ-VAE \cite{kouzelis2025eq} &
Method: aligns reconstructions of transformed latent $z$ with the corresponding transformed input $x$. &
A special case of DSM \\
\bottomrule
\end{tabularx}
\end{table}
\noindent

\subsection{Directional Spectrum Energy Matters in REPA}
\label{subsec: Directional Spectrum Energy Matters in REPA}

\begin{wrapfigure}{r}{0.45\textwidth}
\vskip -0.15in
  \includegraphics[width=0.45\textwidth]{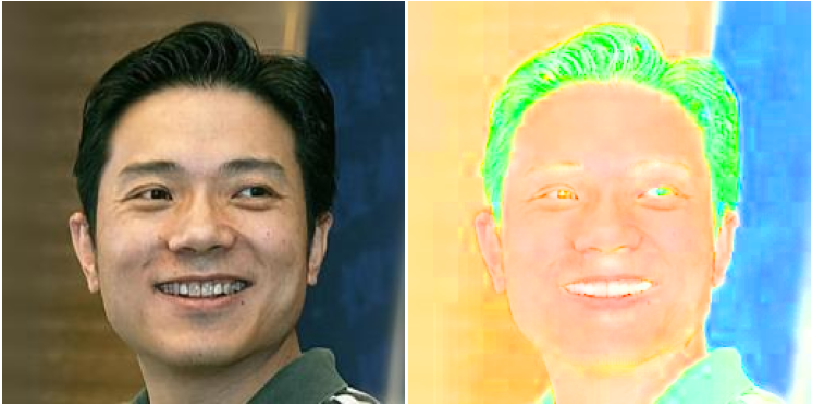}
  \captionsetup{skip=1pt}
  \caption{Right side is the directional image by doing magnitude normalization at each pixel, the directional image maintains the spatial structure of the original image (left)}
\label{fig: directional_image}
\vskip -0.1in
\end{wrapfigure}

Spectrum considerations are not limited to VAE latents. They also help clarify the \emph{representation alignment} objective in REPA: what properties should a target representation have to serve as an effective alignment signal? Recent work iREPA \cite{singh2025matters} argues that the \emph{spatial structure} of the target representation is crucial for REPA, and empirically finds that the RMS Spatial Contrast (RMSC) of the target feature correlates strongly with diffusion generation quality. We observe that the RMSC used in iREPA is mathematically equivalent to the \emph{directional spectral energy} of the target representation (Proposition~\ref{proposition:2}). Hence, iREPA’s finding can be restated in spectral terms: \emph{directional spectral energy of the target representation matters for REPA}. Here, the directional field refers to the direction of feature tokens obtained via magnitude normalization. Figure~\ref{fig: directional_image} provides an intuition by visualizing an RGB image after per-pixel magnitude normalization: although absolute magnitudes are removed, the spatial layout remains clearly visible. This visualization is consistent with signal processing studies, which have shown that the phase/direction largely determines the spatial structure \cite{oppenheim2005importance, hassen2013image}.

\begin{proposition}[RMSC is equivalent to directional spectral energy]
\label{prop:rmsc_nonDC_directional_energy}
Let $\{x_t\}_{t=1}^T \subset \mathbb{R}^D$ be token features with $\|x_t\|_2>0$. Define the normalized tokens
$u_t := x_t/\|x_t\|_2$ and their mean $\bar u := \frac{1}{T}\sum_{t=1}^T u_t$.
Let $\mathrm{RMSC}(x):=\sqrt{\frac{1}{T}\sum_{t=1}^T \|u_t-\bar u\|_2^2}$ be the normalized RMSC.
Let $\{U_k\}_{k=0}^{T-1}$ be the coefficients of applying an \emph{orthonormal} DCT (along the token index $t$)
to each feature dimension of $\{u_t\}$, i.e., $U_k\in\mathbb{R}^D$ is the DCT coefficient vector at frequency $k$.
Then

\noindent
\begin{equation}
\label{eq: RMSC}
\mathrm{RMSC}(x)^2
\;=\;
\frac{1}{T}\sum_{k=1}^{T-1}\|U_k\|_2^2,
\end{equation}
\noindent

i.e., $T \times\,\mathrm{RMSC}(x)^2$ equals the total DCT energy of the direction field excluding the DC term. Proof is in Appendix \ref{append: Proof of Proposition 2}.

\label{proposition:2}
\end{proposition}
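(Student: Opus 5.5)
The plan is to reduce the statement to Parseval's identity for the orthonormal DCT, applied coordinate-wise, together with the fact that the DC basis vector of the orthonormal DCT-II is the constant vector $\frac{1}{\sqrt{T}}\mathbf{1}$. Write the direction field as a $T\times D$ matrix $\mathbf{U}$ whose $t$-th row is $u_t^\top$. The DCT along the token index is then $\mathbf{C}\mathbf{U}$, where $\mathbf{C}\in\mathbb{R}^{T\times T}$ is the orthonormal DCT matrix, so the $k$-th row of $\mathbf{C}\mathbf{U}$ is $U_k^\top$. Since $\mathbf{C}$ is orthogonal, $\sum_{k=0}^{T-1}\|U_k\|_2^2=\|\mathbf{C}\mathbf{U}\|_F^2=\|\mathbf{U}\|_F^2=\sum_{t=1}^T\|u_t\|_2^2$, which is the total energy identity.

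Next, isolate the DC term. The first row of the orthonormal DCT-II is $c_0=\frac{1}{\sqrt{T}}(1,\dots,1)$, so
\[
U_0=\sum_{t}c_{0,t}\,u_t=\frac{1}{\sqrt{T}}\sum_{t=1}^T u_t=\sqrt{T}\,\bar u ,
\qquad\text{hence}\qquad \|U_0\|_2^2=T\|\bar u\|_2^2 .
\]

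Then expand the RMSC in the usual variance form. Using $\sum_t (u_t-\bar u)=0$,
\[
\sum_{t=1}^T\|u_t-\bar u\|_2^2=\sum_{t=1}^T\|u_t\|_2^2-T\|\bar u\|_2^2 .
\]
Substituting the two previous identities gives
\[
\sum_{t=1}^T\|u_t-\bar u\|_2^2=\sum_{k=0}^{T-1}\|U_k\|_2^2-\|U_0\|_2^2=\sum_{k=1}^{T-1}\|U_k\|_2^2 .
\]
Dividing by $T$ yields \eqref{eq: RMSC}. As a side remark, since $\|u_t\|_2=1$ the total energy equals $T$, so $\mathrm{RMSC}^2=1-\|\bar u\|_2^2$, though this is not needed.

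No step is a real obstacle. The only point needing care is the convention that the orthonormal DCT's $k=0$ basis vector is exactly $1/\sqrt{T}$ times the all-ones vector, with the $\sqrt{1/T}$ versus $\sqrt{2/T}$ normalization placed on the DC row. I would state the DCT-II matrix explicitly to fix this. Note also that the argument uses only orthogonality of $\mathbf{C}$ and that its first row is constant, so it holds for any such orthonormal transform along the token index, in particular a flattened 2D DCT over the token grid.
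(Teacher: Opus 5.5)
Your proposal is correct and follows essentially the same route as the paper's proof: Parseval for the orthonormal DCT, $U_0=\sqrt{T}\,\bar u$ from the constant DC row, and the variance expansion $\sum_t\|u_t-\bar u\|_2^2=\sum_t\|u_t\|_2^2-T\|\bar u\|_2^2$. The only difference is that the paper uses $\|u_t\|_2=1$ to pass through $\mathrm{RMSC}(x)^2=1-\|\bar u\|_2^2$, whereas you keep $\sum_t\|u_t\|_2^2$ general, which shows the identity does not actually need the normalization.
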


\paragraph{From DC removal to band-pass filter.}
To increase the spatial contrast of the target representation, iREPA applies a spatial normalization $\hat{Z} \;=\; \frac{Z - \alpha\,\mathrm{mean}(Z)}{\mathrm{std}(Z)+\varepsilon}$. We notice that the mean subtraction removes the DC component, thus we propose a more general frequency method: a Difference-of-Gaussians (DoG) filter, which acts as a band-pass operator and can suppress a broader range of low-frequency components beyond DC while also attenuating very high frequencies. Concretely, we replace the above spatial normalization with:

\noindent
\begin{equation}
\label{eq: DoG}
\mathrm{DoG}(Z) \;=\; (G_{\sigma_1} * Z) \;-\; (G_{\sigma_2} * Z),
\qquad 
\hat{Z} \;=\; \mathrm{DoG}(Z)\big/(\mathrm{std}(Z)+\varepsilon),
\end{equation}
\noindent

where $\sigma_2 > \sigma_1$ and $G_{\sigma_1}, G_{\sigma_2}$ are Gaussian kernels. In Section~\ref{subsec: Results of DoG in REPA}, we show that replacing spatial normalization with DoG yields better generation quality than iREPA.

\section{Experiments}
\label{sec: Experiments}
To evaluate the effectiveness of Spectrum Matching, we construct the Spectrum Matching Autoencoder based on SD-VAE \cite{rombach2022high} without changing their U-Net architecture. For convenience, we refer to the Autoencoders trained our ESM and DSM regularizers as \textbf{ESM-AE} and \textbf{DSM-AE}, respectively. We assess reconstruction quality using reconstruction Fréchet Inception Distance(rFID) \cite{heusel2017gans}, Peak Signal-to-Noise Ratio (PSNR), and Structural Similarity (SSIM) \cite{wang2004image}, and we measure generation quality using gFID. For fair comparison, SD-VAE, Scale Equivariance, ESM-AE, and DSM-AE use the same model capacity and training protocol, and all models are trained from scratch. Full architecture details and training hyperparameters are provided in Appendix~\ref{append: Training Parameters}.

\subsection{Results of ESM and DSM}
\label{subsec: Results of ESM and DSM in VAE}

In the first stage, we train SD-VAE \cite{rombach2022high}, Scale Equivariance \cite{skorokhodov2025improving}, ESM-AE and DSM-AE from scratch on the CelebA 256$\times$256 dataset  \cite{karras2018progressive}. All models share the same network architecture and are trained with batch size 48 for 500,000 steps, which we find sufficient for convergence. In the second stage, we apply the diffusion transformer U-ViT \cite{bao2022all} to learn the latent distributions in an unconditional way. Since there is no clear correlation between reconstruction quality (rFID) and generation quality (gFID) \cite{hansenlearnings}, we train diffusion models from multiple Autoencoder checkpoints and report the checkpoint that yields the best gFID. We compute gFID using 50,000 samples generated by a 100-step DDIM sampler \cite{songdenoising}. For SD-VAE, we use the objective in Eq.~\ref{eq: SDVAE loss}. For Scale Equivariance, we follow \cite{skorokhodov2025improving} and apply the same objective while randomly downsampling both the target image and latent size by $1\times$ (i.e, unchanged), $2\times$, or $4\times$. ESM-AE is trained with Eq.~\ref{eq: VAE-ESM loss} using $\beta=0.01$ and $\delta=1.0$. DSM-AE is trained with Eq.~\ref{eq: VAE-DSM loss}; details of the frequency mask family $\mathcal{M}$ are provided in Appendix~\ref{append: Design of Frequency Mask}. Additional ablations for ESM and DSM are deferred to Appendix~\ref{append: Ablation Studies of ESM and DSM}.

In our experiments, we treat both SD-VAE and Scale Equivariance as the baselines, and compare our Spectrum Matching models against them. We consider two commonly used VAE configurations: $f8d4, f16d16$, where $f$ denotes the spatial downsampling ratio and $d$ means the depth of the latent. Results on Table \ref{tab: Reconstruction and Generation Results on CelebA} highlight that ESM-AE and DSM-AE consistently outperform SD-VAE regarding the generation quality (gFID) with the faster training of diffusion modeling. As a special case of DSM, Scale Equivariance also achieves better gFID than SD-VAE, but is inferior to DSM-AE, which indicates the superiority of DSM in terms of diffusability. Regarding the reconstruction quality, all models show similar performance.

\begin{table}[htb]
\caption{
Reconstruction and generation results on CelebA 256$\times$256, 'diff steps’ denotes the diffusion training step at convergence.}
\label{tab: Reconstruction and Generation Results on CelebA}
\centering
\scriptsize
\begin{tabular}{@{}lllllllllll@{}}
\toprule
\multicolumn{1}{c}{\multirow{2}{*}{Models}} & \multicolumn{5}{c}{f8d4} & \multicolumn{5}{c}{f16d16} \\ \cmidrule(lr){2-6} \cmidrule(lr){7-11} 
 & rFID $\downarrow$ & PSNR $\uparrow$ & SSIM $\uparrow$ & diff steps & gFID $\downarrow$ & rFID $\downarrow$ & PSNR $\uparrow$ & SSIM $\uparrow$ & diff steps & gFID $\downarrow$ \\ \midrule
SD-VAE & 1.31 & 31.23 & 0.788 & 125k & 6.63 & 1.72 & 30.08 & 0.753 & 150k & 6.88 \\ \midrule
Scale Equivariance & 1.24 & 30.98 & 0.781 & 100k & 5.87 & 1.31 & 29.70 & 0.741 & 100k &5.46 \\ \midrule
ESM-AE (ours) & 1.55 & 31.63 & 0.793 & 100k & 5.91 & 1.56 & 31.30 & 0.782 & 100k & 5.84 \\ \midrule
DSM-AE (ours) & 1.44 & 30.07 & 0.760 & 100k & \pmb{4.44} & 1.39 & 29.32 & 0.745 & 100k & \pmb{4.49} \\ \bottomrule
\end{tabular}
\end{table}

We further analyze the latent space and its spectrum to understand the practical effects of ESM and DSM. As shown in Figure~\ref{fig: latent_PCA}, SD-VAE produces noticeably noisier latents, whereas ESM-AE and DSM-AE yield smoother latents while preserving semantic structure. This behavior is reflected more clearly in the spectral domain (Figures~\ref{fig: ESM_spectrum} and \ref{fig: DSM_spectrum}): both ESM and DSM induce a flatter power-law PSD than the RGB image, consistent with our hypothesis. While Scale Equivariance also exhibits a power-law-like spectrum, its latent PSD is less smooth and less consistently regularized than DSM, highlighting DSM’s advantage as a more general and effective regularization than Scale Equivariance.

\noindent
\begin{figure}[htb]
\vskip 0.0in
\begin{center}
\centerline{\includegraphics[width=0.9\columnwidth]{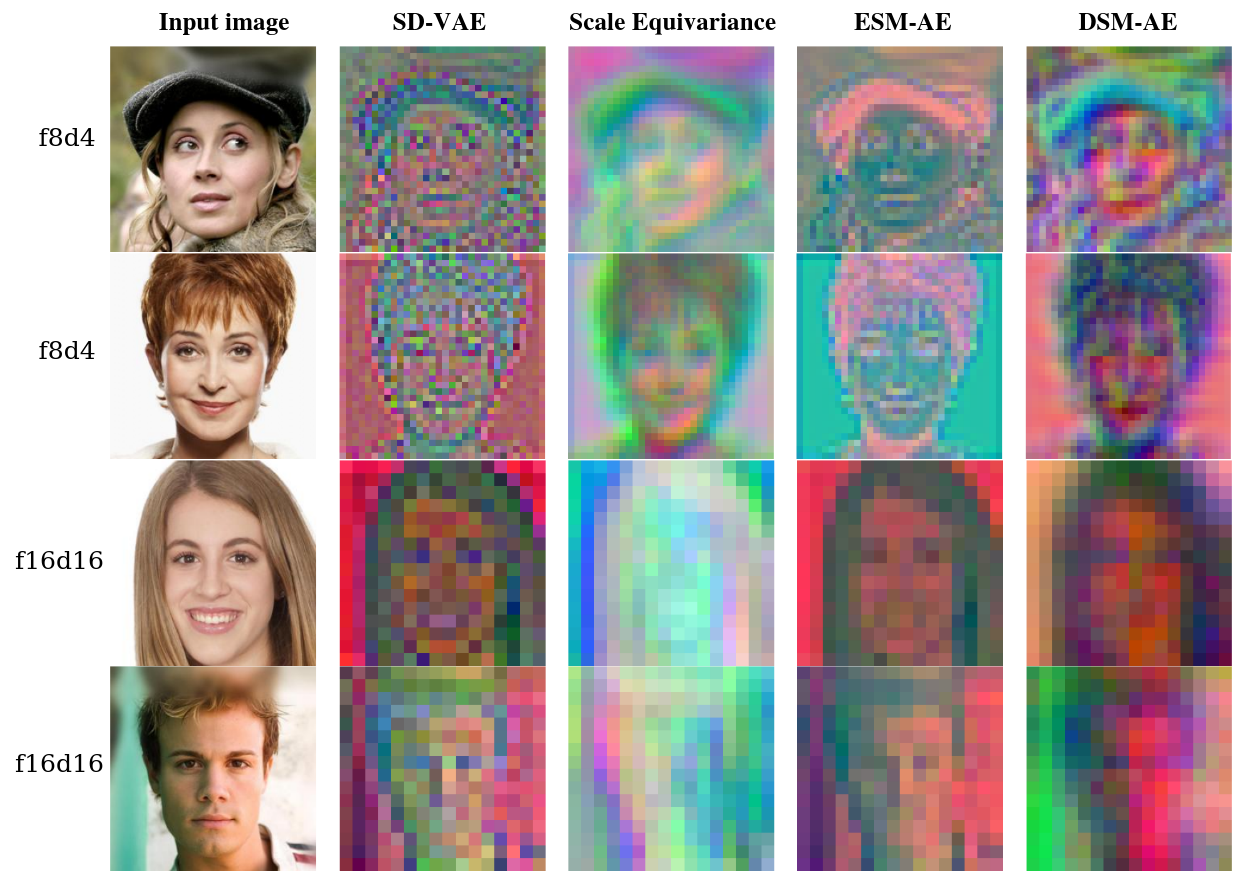}}
\caption{PCA visualization (top three principal components) of the latent space of different Autoencoders. 
}
\label{fig: latent_PCA}
\end{center}
\vskip -0.2in
\end{figure}
\noindent

\noindent
\begin{figure*}[ht]
\vskip -0.1in
\scriptsize
  \begin{minipage}{0.5\linewidth}
    \begin{center}
    \centerline{\includegraphics[width=0.95\columnwidth]{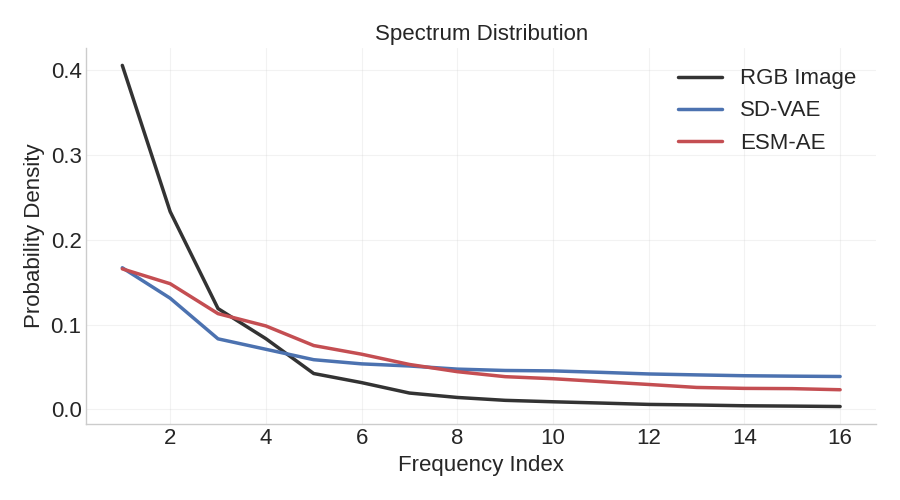}}
    \caption{Spectrum distributions of the latents
    }
    \label{fig: ESM_spectrum}
    \end{center}
  \end{minipage}%
  \hfill
  \begin{minipage}{0.5\linewidth}
    \begin{center}
    \centerline{\includegraphics[width=0.95\columnwidth]{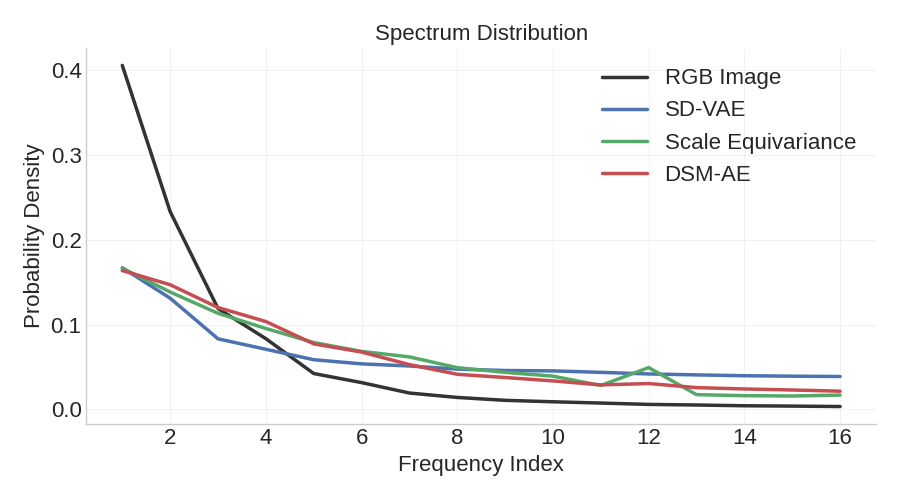}}
    \caption{Spectrum distributions of the latents.
    }
    \label{fig: DSM_spectrum}
    \end{center}
  \end{minipage}
\vskip -0.1in
\end{figure*}
\noindent

Based on the empirical results that DSM can also lead to a flattened power-law PSD and DSM achieves better diffusability than ESM, we advocate the DSM method as a simple solution for Spectrum Matching and further benchmark DSM on ImageNet 256$\times$256. We again train both $f16d16$ type SD-VAE and DSM-AE from scratch for 600k steps, with batch size$=$128. Then, we apply the diffusion Transformer SiT \cite{ma2024sit} to model the latent distribution. To accelerate the diffusion training, we adopt REPA \cite{yu2025repa} and train SiT for 400k steps with batch size$=$256. Table \ref{tab: Reconstruction and Generation Results on ImageNet} indicates that DSM-AE consistently achieves better generation results than SD-VAE under the condition of REPA or without REPA, even though the reconstruction quality is slightly worse than SD-VAE (a similar trend was observed in \cite{skorokhodov2025improving}).

\noindent
\begin{table}[htb]
\caption{
Reconstruction and generation results on ImageNet 256$\times$256; we measure gFID at different diffusion training steps.}
\label{tab: Reconstruction and Generation Results on ImageNet}
\centering
\small
\begin{tabular}{@{}llllllllllll@{}}
\toprule
\multicolumn{1}{c}{\multirow{2}{*}{Models}} & \multicolumn{3}{c}{f16d16} & \multicolumn{4}{c}{gFID (without REPA)} & \multicolumn{4}{c}{gFID (with REPA)} \\ \cmidrule(lr){2-4} \cmidrule(lr){5-8} \cmidrule(lr){9-12} 
\multicolumn{1}{c}{} & rFID $\downarrow$ & PSNR $\uparrow$ & SSIM $\uparrow$ & 100k & 200k & 300k & 400k & 100k & 200k & 300k & 400k \\ \midrule
SD-VAE & 1.20 & 25.44 & 0.659 & 37.78 & 22.95 & 17.40 & 14.58 & 25.86 & 12.46 & 9.01 & 7.60 \\ \midrule
DSM-AE & 1.57 & 23.83 & 0.606 & 31.99 & 18.95 & 14.37 & \pmb{12.20} & 21.92 & 10.64 & 7.83 & \pmb{6.48} \\ \bottomrule
\end{tabular}
\end{table}
\noindent

\subsection{Results of DoG on REPA}
\label{subsec: Results of DoG in REPA}

\begin{wraptable}{r}{0.45\textwidth}
\vskip -0.15in
\small
\centering
\captionsetup{skip=2pt}
\caption{gFID-50k on ImageNet 256$\times$256 using DINOv2-B as encoder on SiT-B/2.}
\begin{tabular}{@{}lllll@{}}
\toprule
\multicolumn{5}{c}{Classifier-Free Guidance = 1.8} \\
\midrule
Models & 100k & 200k & 300k & 400k \\
\midrule
REPA     & 23.92 & 10.67 & 7.13 & 5.68 \\
iREPA    & 17.73 & 8.51  & 6.23 & 5.07 \\
REPA-DoG & 18.52 & 8.60  & \textbf{6.15} & \textbf{4.98} \\
\bottomrule
\end{tabular}
\label{tab:dog_results}
\vskip -0.0in
\end{wraptable}

In Section~\ref{subsec: Directional Spectrum Energy Matters in REPA}, we showed that an effective target representation for REPA should exhibit high \emph{directional spectral energy} (equivalently, strong spatial contrast), and propose the DoG filter (Equation \ref{eq: DoG}) to preprocess the target representation. To verify the effectiveness of DoG, we utilize the codebase of REPA \cite{yu2025repa}, and train REPA,   iREPA \cite{singh2025matters}, and our REPA-DoG on ImageNet 256$\times$256 for 400k steps with SiT-B/2 as the diffusion backbone. All methods use the same recommended training configuration from \cite{yu2025repa} (e.g., alignment coefficient $=0.5$) to ensure a controlled comparison. Table~\ref{tab:dog_results} reports gFID throughout diffusion training and shows that REPA-DoG overtakes REPA and iREPA as training proceeds, achieving the best gFID (4.98) at 400k steps using classifier-free guidance \cite{ho2022classifier}. Without using guidance during, our REPA-DoG achieves gFID=20.37, consistently outperforms REPA (gFID=22.75) and iREPA (gFID=21.40) at 400k training steps. Following iREPA \cite{singh2025matters}, we also visualize the token-wise cosine similarity maps to illustrate the impact of spatial normalization used in iREPA \cite{singh2025matters} and our DoG method. It is clear from Fig. \ref{fig: norm_spatial_structure} that our DoG approach presents stronger spatial contrast than REPA and iREPA, which explains the improved gFID accordinga to the finding of correlation between gFID and spatial contrast in iREPA\cite{singh2025matters}

\noindent
\vskip -0.2in
\begin{figure}[!h]
    \centering    \includegraphics[width=0.8\columnwidth]{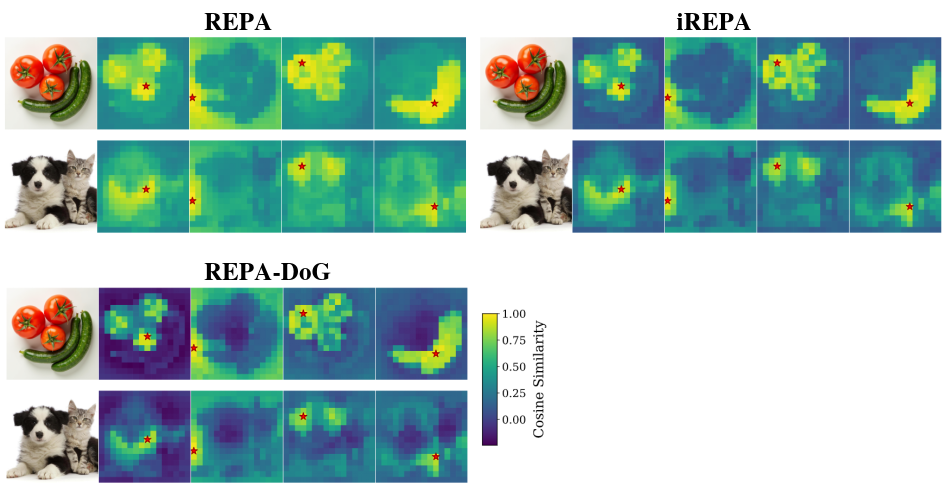}
    \caption{\textbf{Impact of processing the feature representation of DINOv3-B \cite{simeoni2025dinov3}. Comparison of token-wise cosine similarity maps. }The red star indicates the reference token, while the heatmap represents its affinity with the spatial query tokens.}
    \label{fig: norm_spatial_structure}
\end{figure}
\vskip -0.2in
\noindent

\section{Conclusion}
\label{sec: Conclusion}

In this paper, we propose Spectrum Matching as a unified perspective on latent diffusability, formalized by the Spectrum Matching Hypothesis. We instantiate this principle with two practical mechanisms, ESM and DSM. Experiments on CelebA and ImageNet demonstrate improved generation quality over SD-VAE and the prior method. We further extend the spectral perspective to REPA by linking RMSC to directional spectral energy and introducing a DoG-based band-pass preprocessing that yields additional gains. Our study focuses on image VAEs given the limited computational resources. A key limitation is that we did not investigate Spectrum Matching in video autoencoders, where temporal frequency structure and spatiotemporal coupling may introduce new constraints and opportunities. We leave this to our future work.

\section{Acknowledgement}
\label{sec: Acknowledgement}

We acknowledge the EuroHPC Joint Undertaking for awarding this project access to the EuroHPC supercomputer LEONARDO, hosted by CINECA (Italy) and the LEONARDO consortium through an EuroHPC [Extreme/Regular] Access call. We thank Long Zhao for providing suggestions on VAE training.

\newpage
\bibliography{bibi}

\begin{thebibliography}{10}
\providecommand{\url}[1]{#1}
\csname url@samestyle\endcsname
\providecommand{\newblock}{\relax}
\providecommand{\bibinfo}[2]{#2}
\providecommand{\BIBentrySTDinterwordspacing}{\spaceskip=0pt\relax}
\providecommand{\BIBentryALTinterwordstretchfactor}{4}
\providecommand{\BIBentryALTinterwordspacing}{\spaceskip=\fontdimen2\font plus
\BIBentryALTinterwordstretchfactor\fontdimen3\font minus \fontdimen4\font\relax}
\providecommand{\BIBforeignlanguage}[2]{{%
\expandafter\ifx\csname l@#1\endcsname\relax
\typeout{** WARNING: IEEEtran.bst: No hyphenation pattern has been}%
\typeout{** loaded for the language `#1'. Using the pattern for}%
\typeout{** the default language instead.}%
\else
\language=\csname l@#1\endcsname
\fi
#2}}
\providecommand{\BIBdecl}{\relax}
\BIBdecl

\bibitem{rombach2022high}
R.~Rombach, A.~Blattmann, D.~Lorenz, P.~Esser, and B.~Ommer, ``High-resolution image synthesis with latent diffusion models,'' in \emph{CVPR}, 2022, pp. 10\,684--10\,695.

\bibitem{wan2025wan}
T.~Wan, A.~Wang, B.~Ai, B.~Wen, C.~Mao, C.-W. Xie, D.~Chen, F.~Yu, H.~Zhao, J.~Yang \emph{et~al.}, ``Wan: Open and advanced large-scale video generative models,'' \emph{arXiv preprint arXiv:2503.20314}, 2025.

\bibitem{wu2025hunyuanvideo}
B.~Wu, C.~Zou, C.~Li, D.~Huang, F.~Yang, H.~Tan, J.~Peng, J.~Wu, J.~Xiong, J.~Jiang \emph{et~al.}, ``Hunyuanvideo 1.5 technical report,'' \emph{arXiv preprint arXiv:2511.18870}, 2025.

\bibitem{ma2025step}
G.~Ma, H.~Huang, K.~Yan, L.~Chen, N.~Duan, S.~Yin, C.~Wan, R.~Ming, X.~Song, X.~Chen \emph{et~al.}, ``Step-video-t2v technical report: The practice, challenges, and future of video foundation model,'' \emph{arXiv preprint arXiv:2502.10248}, 2025.

\bibitem{ho2020denoising}
J.~Ho, A.~Jain, and P.~Abbeel, ``Denoising diffusion probabilistic models,'' \emph{NeurIPS}, vol.~33, pp. 6840--6851, 2020.

\bibitem{songscore}
Y.~Song, J.~Sohl-Dickstein, D.~P. Kingma, A.~Kumar, S.~Ermon, and B.~Poole, ``Score-based generative modeling through stochastic differential equations,'' in \emph{ICLR}, 2020.

\bibitem{peebles2023scalable}
W.~Peebles and S.~Xie, ``Scalable diffusion models with transformers,'' in \emph{ICCV}, 2023, pp. 4195--4205.

\bibitem{liuflow}
X.~Liu, C.~Gong \emph{et~al.}, ``Flow straight and fast: Learning to generate and transfer data with rectified flow,'' in \emph{ICLR}, 2023.

\bibitem{labs2025flux}
B.~F. Labs, S.~Batifol, A.~Blattmann, F.~Boesel, S.~Consul, C.~Diagne, T.~Dockhorn, J.~English, Z.~English, P.~Esser \emph{et~al.}, ``Flux. 1 kontext: Flow matching for in-context image generation and editing in latent space,'' \emph{arXiv preprint arXiv:2506.15742}, 2025.

\bibitem{wu2025qwen}
C.~Wu, J.~Li, J.~Zhou, J.~Lin, K.~Gao, K.~Yan, S.-m. Yin, S.~Bai, X.~Xu, Y.~Chen \emph{et~al.}, ``Qwen-image technical report,'' \emph{arXiv preprint arXiv:2508.02324}, 2025.

\bibitem{cai2025z}
H.~Cai, S.~Cao, R.~Du, P.~Gao, S.~Hoi, Z.~Hou, S.~Huang, D.~Jiang, X.~Jin, L.~Li \emph{et~al.}, ``Z-image: An efficient image generation foundation model with single-stream diffusion transformer,'' \emph{arXiv preprint arXiv:2511.22699}, 2025.

\bibitem{hansenlearnings}
P.~Hansen-Estruch, D.~Yan, C.-Y. Chuang, O.~Zohar, J.~Wang, T.~Hou, T.~Xu, S.~Vishwanath, P.~Vajda, and X.~Chen, ``Learnings from scaling visual tokenizers for reconstruction and generation,'' in \emph{ICML}, 2025.

\bibitem{skorokhodov2025improving}
I.~Skorokhodov, S.~Girish, B.~Hu, W.~Menapace, Y.~Li, R.~Abdal, S.~Tulyakov, and A.~Siarohin, ``Improving the diffusability of autoencoders,'' \emph{ICML}, 2025.

\bibitem{liu2025delving}
S.~Liu, X.~Deng, Z.~Yang, J.~Teng, X.~Gu, and J.~Tang, ``Delving into latent spectral biasing of video vaes for superior diffusability,'' \emph{arXiv preprint arXiv:2512.05394}, 2025.

\bibitem{yao2025reconstruction}
J.~Yao, B.~Yang, and X.~Wang, ``Reconstruction vs. generation: Taming optimization dilemma in latent diffusion models,'' in \emph{CVPR}, 2025, pp. 15\,703--15\,712.

\bibitem{fan2025prism}
W.~Fan, H.~Diao, Q.~Wang, D.~Lin, and Z.~Liu, ``The prism hypothesis: Harmonizing semantic and pixel representations via unified autoencoding,'' \emph{arXiv preprint arXiv:2512.19693}, 2025.

\bibitem{kouzelis2025eq}
T.~Kouzelis, I.~Kakogeorgiou, S.~Gidaris, and N.~Komodakis, ``Eq-vae: Equivariance regularized latent space for improved generative image modeling,'' \emph{ICML}, 2025.

\bibitem{yu2025repa}
S.~Yu, S.~Kwak, H.~Jang, J.~Jeong, J.~Huang, J.~Shin, and S.~Xie, ``Representation alignment for generation: Training diffusion transformers is easier than you think,'' in \emph{ICLR}, 2025.

\bibitem{singh2025matters}
J.~Singh, X.~Leng, Z.~Wu, L.~Zheng, R.~Zhang, E.~Shechtman, and S.~Xie, ``What matters for representation alignment: Global information or spatial structure?'' \emph{ICLR}, 2026.

\bibitem{podellsdxl}
D.~Podell, Z.~English, K.~Lacey, A.~Blattmann, T.~Dockhorn, J.~M{\"u}ller, J.~Penna, and R.~Rombach, ``Sdxl: Improving latent diffusion models for high-resolution image synthesis,'' in \emph{ICLR}, 2024.

\bibitem{esser2024scaling}
P.~Esser, S.~Kulal, A.~Blattmann, R.~Entezari, J.~M{\"u}ller, H.~Saini, Y.~Levi, D.~Lorenz, A.~Sauer, F.~Boesel \emph{et~al.}, ``Scaling rectified flow transformers for high-resolution image synthesis,'' in \emph{ICML}, 2024.

\bibitem{chendeep}
J.~Chen, H.~Cai, J.~Chen, E.~Xie, S.~Yang, H.~Tang, M.~Li, and S.~Han, ``Deep compression autoencoder for efficient high-resolution diffusion models,'' in \emph{ICLR}, 2025.

\bibitem{medi2025missing}
T.~Medi, H.-Y. Wang, A.~Rampini, and M.~Keuper, ``Missing fine details in images: Last seen in high frequencies,'' \emph{arXiv preprint arXiv:2509.05441}, 2025.

\bibitem{chen2025dc}
J.~Chen, D.~Zou, W.~He, J.~Chen, E.~Xie, S.~Han, and H.~Cai, ``Dc-ae 1.5: Accelerating diffusion model convergence with structured latent space,'' in \emph{ICCV}, 2025, pp. 19\,628--19\,637.

\bibitem{team2025nextstep}
N.~Team, C.~Han, G.~Li, J.~Wu, Q.~Sun, Y.~Cai, Y.~Peng, Z.~Ge, D.~Zhou, H.~Tang \emph{et~al.}, ``Nextstep-1: Toward autoregressive image generation with continuous tokens at scale,'' \emph{arXiv preprint arXiv:2508.10711}, 2025.

\bibitem{zheng2025diffusion}
B.~Zheng, N.~Ma, S.~Tong, and S.~Xie, ``Diffusion transformers with representation autoencoders,'' \emph{ICLR}, 2026.

\bibitem{ning2023input}
M.~Ning, E.~Sangineto, A.~Porrello, S.~Calderara, and R.~Cucchiara, ``Input perturbation reduces exposure bias in diffusion models,'' \emph{ICML}, 2023.

\bibitem{ning2023elucidating}
M.~Ning, M.~Li, J.~Su, A.~A. Salah, and I.~O. Ertugrul, ``Elucidating the exposure bias in diffusion models,'' \emph{ICLR}, 2024.

\bibitem{zhaoepsilon}
L.~Zhao, S.~Woo, Z.~Wan, Y.~LI, H.~Zhang, B.~Gong, H.~Adam, X.~Jia, and T.~Liu, ``Epsilon-vae: Denoising as visual decoding,'' in \emph{ICML}, 2025.

\bibitem{oquab2024dinov2}
M.~Oquab, T.~Darcet, T.~Moutakanni, H.~Vo, M.~Szafraniec, V.~Khalidov, P.~Fernandez, D.~Haziza, F.~Massa, A.~El-Nouby \emph{et~al.}, ``Dinov2: Learning robust visual features without supervision,'' \emph{TMLR}, 2024.

\bibitem{bonnaire2025diffusion}
T.~Bonnaire, R.~Urfin, G.~Biroli, and M.~M{\'e}zard, ``Why diffusion models don't memorize: The role of implicit dynamical regularization in training,'' \emph{NeurIPS}, 2025.

\bibitem{falck2025fourier}
F.~Falck, T.~Pandeva, K.~Zahirnia, R.~Lawrence, R.~Turner, E.~Meeds, J.~Zazo, and S.~Karmalkar, ``A fourier space perspective on diffusion models,'' \emph{arXiv preprint arXiv:2505.11278}, 2025.

\bibitem{dieleman2024spectral}
\BIBentryALTinterwordspacing
S.~Dieleman, ``Diffusion is spectral autoregression,'' 2024. [Online]. Available: \url{https://sander.ai/2024/09/02/spectral-autoregression.html}
\BIBentrySTDinterwordspacing

\bibitem{ahmed1974discrete}
N.~Ahmed, T.~Natarajan, and K.~R. Rao, ``Discrete cosine transform,'' \emph{IEEE transactions on Computers}, vol. 100, no.~1, pp. 90--93, 1974.

\bibitem{lai2025toward}
B.~Lai, X.~Wang, S.~Rambhatla, J.~M. Rehg, Z.~Kira, R.~Girdhar, and I.~Misra, ``Toward diffusible high-dimensional latent spaces: A frequency perspective,'' \emph{arXiv preprint arXiv:2511.22249}, 2025.

\bibitem{leng2025repa}
X.~Leng, J.~Singh, Y.~Hou, Z.~Xing, S.~Xie, and L.~Zheng, ``Repa-e: Unlocking vae for end-to-end tuning of latent diffusion transformers,'' in \emph{ICCV}, 2025, pp. 18\,262--18\,272.

\bibitem{ning2025dctdiff}
M.~Ning, M.~Li, J.~Su, J.~Haozhe, L.~Liu, M.~Benes, W.~Chen, A.~A. Salah, and I.~O. Ertugrul, ``Dctdiff: Intriguing properties of image generative modeling in the dct space,'' in \emph{ICML}.\hskip 1em plus 0.5em minus 0.4em\relax PMLR, 2025, pp. 46\,498--46\,524.

\bibitem{oppenheim2005importance}
A.~V. Oppenheim and J.~S. Lim, ``The importance of phase in signals,'' \emph{Proceedings of the IEEE}, vol.~69, no.~5, pp. 529--541, 2005.

\bibitem{hassen2013image}
R.~Hassen, Z.~Wang, and M.~M. Salama, ``Image sharpness assessment based on local phase coherence,'' \emph{IEEE Transactions on Image Processing}, vol.~22, no.~7, pp. 2798--2810, 2013.

\bibitem{heusel2017gans}
M.~Heusel, H.~Ramsauer, T.~Unterthiner, B.~Nessler, and S.~Hochreiter, ``Gans trained by a two time-scale update rule converge to a local nash equilibrium,'' \emph{NeurIPS}, 2017.

\bibitem{wang2004image}
Z.~Wang, A.~C. Bovik, H.~R. Sheikh, and E.~P. Simoncelli, ``Image quality assessment: from error visibility to structural similarity,'' \emph{IEEE transactions on image processing}, vol.~13, no.~4, pp. 600--612, 2004.

\bibitem{karras2018progressive}
T.~Karras, T.~Aila, S.~Laine, and J.~Lehtinen, ``Progressive growing of gans for improved quality, stability, and variation,'' in \emph{ICLR}, 2018.

\bibitem{bao2022all}
F.~Bao, S.~Nie, K.~Xue, Y.~Cao, C.~Li, H.~Su, and J.~Zhu, ``All are worth words: A vit backbone for diffusion models,'' in \emph{CVPR}, 2023.

\bibitem{songdenoising}
J.~Song, C.~Meng, and S.~Ermon, ``Denoising diffusion implicit models,'' in \emph{ICLR}, 2021.

\bibitem{ma2024sit}
N.~Ma, M.~Goldstein, M.~S. Albergo, N.~M. Boffi, E.~Vanden-Eijnden, and S.~Xie, ``Sit: Exploring flow and diffusion-based generative models with scalable interpolant transformers,'' in \emph{ECCV}, 2024, pp. 23--40.

\bibitem{ho2022classifier}
J.~Ho and T.~Salimans, ``Classifier-free diffusion guidance,'' \emph{arXiv preprint arXiv:2207.12598}, 2022.

\bibitem{simeoni2025dinov3}
O.~Sim{\'e}oni, H.~V. Vo, M.~Seitzer, F.~Baldassarre, M.~Oquab, C.~Jose, V.~Khalidov, M.~Szafraniec, S.~Yi, M.~Ramamonjisoa \emph{et~al.}, ``Dinov3,'' \emph{arXiv preprint arXiv:2508.10104}, 2025.

\bibitem{mallat2002theory}
S.~G. Mallat, ``A theory for multiresolution signal decomposition: the wavelet representation,'' \emph{IEEE transactions on pattern analysis and machine intelligence}, vol.~11, no.~7, pp. 674--693, 2002.

\bibitem{wallace1991jpeg}
G.~K. Wallace, ``The jpeg still picture compression standard,'' \emph{Communications of the ACM}, vol.~34, no.~4, pp. 30--44, 1991.

\end{thebibliography}
\bibliographystyle{IEEEtran.bst}

\newpage
\appendix

\section{Appendix}

\subsection{Proof of Proposition~\ref{proposition:1}}
\label{append: Proof of Proposition}

Let $\pmb{x}_0$ be a random natural image and $y_0(\omega)\triangleq \mathcal{F}(\pmb{x}_0)(\omega)$ be its Fourier coefficients where $\mathcal{F}$ denotes the unitary discrete Fourier transform (DFT). By Parseval Theorem,
\[
\|\pmb{x}_0-\hat{\pmb{x}}_0\|_2^2 \propto \sum_{\omega}\big|y_0(\omega)-\hat{y}_0(\omega)\big|^2,
\]
hence, for a diffusion model $\hat{\pmb{x}}_{0,\theta}(\pmb{x}_t,t)$ and any timestep $t$, we have
\[
\mathcal{L}_t(\theta)=\mathbb{E}\|\pmb{x}_0-\hat{\pmb{x}}_{0,\theta}(\pmb{x}_t,t)\|_2^2
\propto \sum_{\omega}\mathbb{E}\Big[\big|y_0(\omega)-\hat{y}_{0,\theta}(y_t,t;\omega)\big|^2\Big].
\]
Therefore, the spatial domain MSE objective can be decomposed as a sum of per-frequency MSE terms.

Define the scalar variables
$Y_0\triangleq y_0(\omega)$ and $Y_t\triangleq y_t(\omega)$ for any frequency $\omega$, we have the diffusion transition
\[
Y_t=\sqrt{\bar{\alpha}_t}\,Y_0+\sqrt{1-\bar{\alpha}_t}\,\eta,
\qquad \eta\sim\mathcal{N}(0,1),
\]
where the noise variance is independent of $\omega$.

Let $S(\omega)\triangleq \mathbb{E}|Y_0|^2$. Consider estimating $Y_0$ from $Y_t$ with squared error. The Bayes-optimal estimator is the conditional
mean $\mathbb{E}[Y_0\mid Y_t]$, and the minimum achievable MSE is
$\mathbb{E}\big[|Y_0-\mathbb{E}[Y_0\mid Y_t]|^2\big]$.
Applying second-moment decomposition, we have
\[
\mathbb{E}|Y_0|^2
= \mathbb{E}\big|\mathbb{E}[Y_0\mid Y_t]\big|^2 + \mathbb{E}\big[|Y_0-\mathbb{E}[Y_0\mid Y_t]|^2\big].
\]
Thus the maximal achievable reduction of the per-frequency MSE equals
\[
\Delta(\omega,t)\triangleq \mathbb{E}|Y_0|^2-\mathbb{E}\big[|Y_0-\mathbb{E}[Y_0\mid Y_t]|^2\big]
= \mathbb{E}\big|\mathbb{E}[Y_0\mid Y_t]\big|^2,
\]
which we interpret as the \emph{learnable signal power} at frequency $\omega$ and time $t$.

Under the local Gaussian/LMMSE approximation for $(Y_0,Y_t)$, the conditional mean is linear:
$\mathbb{E}[Y_0\mid Y_t]=cY_t$, where
\[
c=\frac{\mathrm{Cov}(Y_0,Y_t)}{\mathrm{Var}(Y_t)}.
\]
Using $\mathbb{E}[Y_0\eta]=0$, we have
\[
\mathrm{Cov}(Y_0,Y_t)=\sqrt{\bar{\alpha}_t}\,\mathbb{E}|Y_0|^2=\sqrt{\bar{\alpha}_t}\,S(\omega),
\qquad
\mathrm{Var}(Y_t)=\bar{\alpha}_t S(\omega)+(1-\bar{\alpha}_t).
\]
Therefore
\[
c=\frac{\sqrt{\bar{\alpha}_t}\,S(\omega)}{\bar{\alpha}_t S(\omega)+(1-\bar{\alpha}_t)}.
\]
Hence the learnable signal power is
\[
\Delta(\omega,t)=\mathbb{E}|cY_t|^2=c^2\,\mathrm{Var}(Y_t)
=\frac{\bar{\alpha}_t S(\omega)^2}{\bar{\alpha}_t S(\omega)+(1-\bar{\alpha}_t)}.
\]

Define $\mathrm{SNR}_t(\omega)\triangleq \frac{\bar{\alpha}_t S(\omega)}{1-\bar{\alpha}_t}$. Then
\[
\Delta(\omega,t)
= S(\omega)\cdot \frac{\bar{\alpha}_t S(\omega)}{\bar{\alpha}_t S(\omega)+(1-\bar{\alpha}_t)}
= S(\omega)\cdot \frac{\mathrm{SNR}_t(\omega)}{1+\mathrm{SNR}_t(\omega)}.
\]
Thus the maximal achievable reduction of the per-frequency MSE is proportional to
$G_t(\omega)\triangleq S(\omega)\frac{\mathrm{SNR}_t(\omega)}{1+\mathrm{SNR}_t(\omega)}$, which proves
the proposition.

\subsection{Encoding Spectrum Matching (ESM) from an Information Theory Perspective}
\label{append: ESM from an Information Theory Perspective}

Natural images commonly have a power-law spatial spectrum,
$S_x(\omega)\propto \|\omega\|^{-\alpha}$ with $\alpha>0$, implying that low frequencies carry substantially large energy. Under the limited latent dimensionality constraint, the latent variable must allocate a limited capacity budget across frequencies. If we model the carried information by latent via maximizing its entropy under the budget, \textbf{the optimal allocation tends to make the latent spectrum as flat as possible (a whitening tendency)}. As a result, the encoder $E(.)$ will suppress low-frequency redundancy and relatively enhance high-frequency energy, making the latent PSD flatter than the input image, forming the core of ESM hypothesis:  $S_z(\omega)\propto \|\omega\|^{-(\alpha-\delta)}$ with $\delta>0$.

\begin{remark}[Gaussian reference for a maximum-entropy upper bound]
We do not assume that the latent field is exactly Gaussian.
Instead, we use a Gaussian reference model for second-order spectral analysis: among all stationary random fields sharing the same PSD (equivalently, the same second-order statistics), the Gaussian one has the maximum differential entropy rate.
Therefore, analyzing the Gaussian case provides a valid maximum-entropy upper-bound argument under a fixed PSD (power) constraint.
\end{remark}

\begin{lemma}[Maximum-entropy spectrum under a finite power budget implies flattening effect]
Let $z$ be a zero-mean, wide-sense stationary latent random field with power spectral density $S_z(\omega)>0$ defined on a continuous frequency domain $\Omega$.
Assume the latent is subject to an energy budget
\begin{equation}
\int_{\Omega} S_z(\omega)\,d\omega = P,
\label{eq:latent_power_budget}
\end{equation}
for some constant $P>0$.
Under the Gaussian reference model, the differential entropy rate is maximized by a flat spectrum:
\begin{equation}
S_z^\star(\omega)=\frac{P}{\mathrm{Vol}(\Omega)}
\quad \text{for a.e. } \omega\in\Omega,
\label{eq:flat_optimal_psd}
\end{equation}
where $\mathrm{Vol}(\Omega)$ denotes the volume of the frequency domain.
Hence, the entropy-maximizing latent distribution exhibits a whitening (spectral-flattening) tendency.
\label{lemma:1}
\end{lemma}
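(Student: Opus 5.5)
We take the Gaussian reference model from the preceding Remark and use the Kolmogorov--Szeg\H{o} formula for a stationary Gaussian field. Its differential entropy rate is, up to an additive constant independent of $S_z$,
\[
h(z)=\frac{1}{2\,\mathrm{Vol}(\Omega)}\int_{\Omega}\log\bigl(2\pi e\,S_z(\omega)\bigr)\,d\omega .
\]
For a finite-dimensional discretized latent, this is just the sum of the log-variances of the (asymptotically) independent Fourier coefficients, which makes the formula plausible. Maximizing entropy therefore reduces to maximizing the functional $J[S_z]=\int_\Omega \log S_z(\omega)\,d\omega$ over positive $S_z$ subject to the budget \eqref{eq:latent_power_budget}.

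The core step is Jensen's inequality for the concave function $\log$. We apply it with respect to the uniform probability measure $d\omega/\mathrm{Vol}(\Omega)$ on $\Omega$, which requires $\mathrm{Vol}(\Omega)<\infty$, the bounded discrete-grid frequency domain case. This gives
\[
\frac{1}{\mathrm{Vol}(\Omega)}\int_\Omega \log S_z\,d\omega \;\le\; \log\!\Bigl(\frac{1}{\mathrm{Vol}(\Omega)}\int_\Omega S_z\,d\omega\Bigr)=\log\frac{P}{\mathrm{Vol}(\Omega)} .
\]
Equality holds if and only if $S_z$ is a.e. constant, and the constraint forces that constant to be $P/\mathrm{Vol}(\Omega)$. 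This establishes \eqref{eq:flat_optimal_psd} and shows the maximizer is unique up to null sets.

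As a cross-check, we can run the Lagrangian argument. Setting the first variation of $\int\log S_z-\lambda(\int S_z-P)$ to zero gives $1/S_z=\lambda$, so again $S_z$ is constant. Jensen is preferable because it also certifies global optimality and uniqueness without any regularity assumptions.

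For the interpretive conclusion, we compare the flat optimum with the power-law input spectrum $S_x\propto|\omega|^{-\alpha}$. Any move toward the optimum, for example interpolating $S_z=S_x^{1-s}$ renormalized, increases $J$ monotonically in $s\in[0,1]$ by the same convexity argument. This interpolation corresponds to exponent $\alpha-\delta$ with $\delta=s\alpha>0$, and so gives the claimed flattening tendency.

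The main obstacle is rigor at the boundary. We must justify that the entropy rate is well defined (requiring $\log S_z\in L^1$, else $J=-\infty$ and such spectra are trivially suboptimal) and that the Gaussian reference bound legitimately transfers. The latter is handled by the Remark: for fixed PSD the Gaussian field has maximal entropy, so the flat Gaussian spectrum upper-bounds all admissible fields.
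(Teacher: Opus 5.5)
Your proof is correct and follows the paper's route. Both use the spectral (Kolmogorov--Szeg\H{o}) form of the Gaussian entropy rate, then Jensen's inequality for $\log$ under the uniform measure on $\Omega$, with equality iff $S_z$ is a.e.\ constant. Your extra interpolation step, which the paper states only informally, is also valid, although monotonicity in $s$ actually rests on convexity of $u\mapsto\log\int_\Omega S_x^{u}\,d\omega$ rather than on the same Jensen step.
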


\begin{proof}[Sketch of Proof]
For a zero-mean stationary Gaussian random field, the differential entropy rate admits the spectral representation
\begin{equation}
h(z)=\frac12\int_{\Omega}\log S_z(\omega)\,d\omega + \mathrm{const}.
\label{eq:entropy_rate_psd}
\end{equation}
Therefore, maximizing $h(z)$ under equation \eqref{eq:latent_power_budget} is equivalent to maximizing
$\int_{\Omega}\log S_z(\omega)\,d\omega$ under the same constraint.

Since $\log(\cdot)$ is strictly concave on $\mathbb{R}_{>0}$, applying Jensen's inequality yields

\begin{equation}
\frac{1}{\mathrm{Vol}(\Omega)}\int_{\Omega}\log S_z(\omega)\,d\omega
\le
\log\!\left(\frac{1}{\mathrm{Vol}(\Omega)}\int_{\Omega} S_z(\omega)\,d\omega\right)
=
\log\!\left(\frac{P}{\mathrm{Vol}(\Omega)}\right),
\label{eq:jensen_flat}
\end{equation}
where the last equality uses the energy budget equation \eqref{eq:latent_power_budget}.
Moreover, equality holds if and only if $S_z(\omega)$ is constant almost everywhere on $\Omega$.
Hence the maximizer is
\[
S_z^\star(\omega)=\frac{P}{\mathrm{Vol}(\Omega)}
\quad \text{for a.e. } \omega\in\Omega,
\]
which proves the Lemma.
\end{proof}

\noindent\textbf{Implication of Lemma \ref{lemma:1} in VAE.} If the encoder is viewed in second-order statistics as shaping the input spectrum through an effective frequency response, then Lemma \ref{lemma:1} implies that, under a finite latent space, the encoder $E(.)$ tends to \emph{flatten} the latent spectrum for entropy maximization (to carry the maximum information). Therefore, compared to $S_x(\omega)$, the latent PSD should decay more slowly with frequency, which can be described as
\[
S_z(\omega)\propto \|\omega\|^{-(\alpha-\delta)}\quad\text{with}\quad \delta>0,
\]
which indicates the behavior of the encoder $E(.)$ that suppresses low-frequency redundancy and relatively improves high-frequency energy.

\subsection{Spectrum Analysis of UAE and VA-VAE}
\label{append: Analysis of Low Frequency Alignment in UAE}

Recall that our ESM hypothesis states that the latent spectrum of $\pmb{z} = E(\pmb{x})$ should follows an approximately power-law PSD $S_{\pmb{z}}(\omega)\propto |\omega|^{-(\alpha-\delta)}$ with $\delta>0$ flattening the natural-image spectrum $S_{\pmb{x}}(\omega)\propto |\omega|^{-\alpha}$, we now analyze the methods UAE and VA-VAE which align the latent $z$ with DINOv2 features \cite{oquab2024dinov2}. We first measure the spectrum distribution of DINOv2 on ImageNet 256$\times$256 dataset, the results on Figure \ref{fig: dinov2_spectrum} demonstrate that the spectrum of DINOv2 features approximately follows a power-law PSD $S_{\pmb{z}}(\omega)\propto |\omega|^{-(\alpha-\delta)}$ with $\delta=1.0$. In detail, the RGB spectrum distribution is evaluated using 50,000 random images, then the power-law target ($\delta=1.0$) is formed by flattening the RGB spectrum.

\begin{figure}[h]
\vskip -0.1in
\begin{center}
\centerline{\includegraphics[width=0.7\columnwidth]{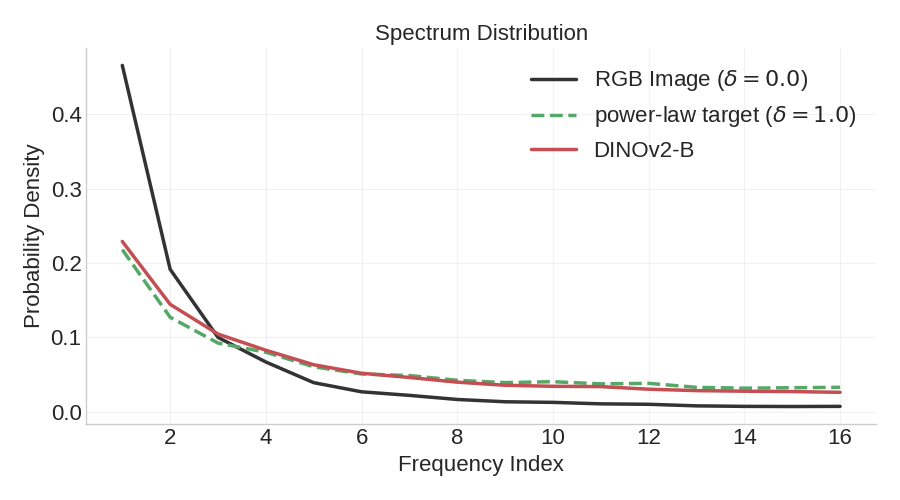}}
\caption{Spectrum distributions of the input image and DINOv2 features on ImageNet 256$\times$256. 
}
\label{fig: dinov2_spectrum}
\end{center}
\vskip -0.0in
\end{figure}

In addition, we measure the practical spectrum distribution of VA-VAE latents on ImageNet 256$\times$256. Figure \ref{fig: vavae_spectrum} shows that the VA-VAE latent spectrum is close to the power-law target PSD ($\delta=1.0$), indicating that \emph{DINOv2 feature alignment is an implicit regularization of ESM}.

\begin{figure}[h]
\vskip -0.1in
\begin{center}
\centerline{\includegraphics[width=0.7\columnwidth]{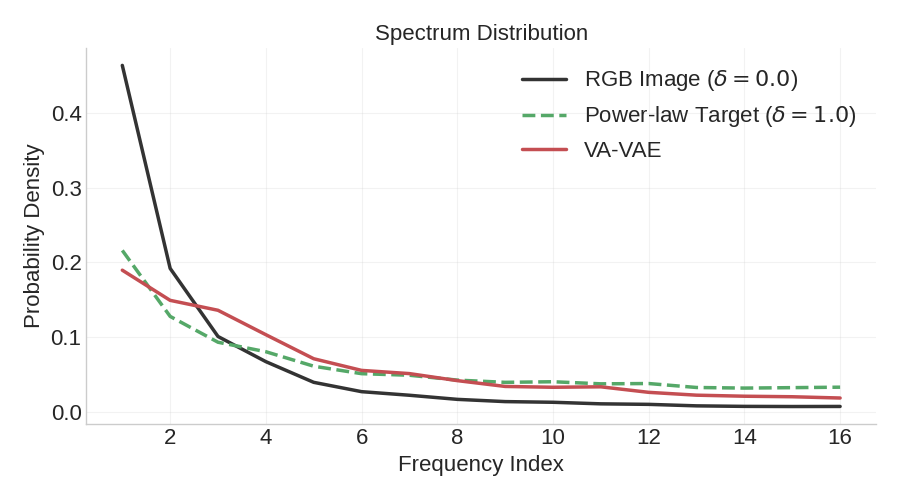}}
\caption{Spectrum distributions of the input image and VA-VAE latents on ImageNet 256$\times$256. 
}
\label{fig: vavae_spectrum}
\end{center}
\vskip -0.0in
\end{figure}

\subsection{Scale Equivariance and EQ-VAE are special cases of DSM}
\label{append: Scale Equivariance and EQ-VAE are special cases of DSM}

Scale Equivariance \cite{skorokhodov2025improving} applies spatial transform (downsampling by 2$\times$ or 4$\times$) on the latent $\pmb{z}$ and force the decoder to reconstruct the corresponding spatial transformed input image $\pmb{x}$. Similarly, EQ-VAE \cite{kouzelis2025eq} uses two types of spatial transformation, rotation and scaling, to regularize the decoder. Since the gFID performance gain mainly comes from the scaling transform \cite{kouzelis2025eq}, we can first unify both Scale Equivariance and EQ-VAE as a method of spatial downsampling.

It is well known that spatial downsampling is equivalent to filtering out high-frequency components in the frequency domain. For example, the Discrete Wavelet Transform (DWT) preserves a downsampled representation of the image in the top-left corner of the coefficient map \cite{mallat2002theory}. Similarly, in the case of the 2D-DCT \cite{ahmed1974discrete}, the DCT coefficients located in the top-left quarter (purple region in Figure~\ref{fig: DCT_upsampling}) contain all the information required to reconstruct the image downsampled by a factor of $2\times$ in the spatial domain (see \cite{ning2025dctdiff} for details). Therefore, the spatial downsampling strategies used in \cite{skorokhodov2025improving,kouzelis2025eq} can be interpreted as removing high-frequency components from the spectrum (gray region in Figure~\ref{fig: DCT_upsampling}), which corresponds to a special case of DSM. Finally, the rotation transform used in \cite{kouzelis2025eq} does not change the spectral PSD; thus, the rotation operation remains within the scope of our proposed Spectrum Matching.

\begin{figure}[ht]
\vskip -0.0in
\begin{center}
\centerline{\includegraphics[width=0.5\columnwidth]{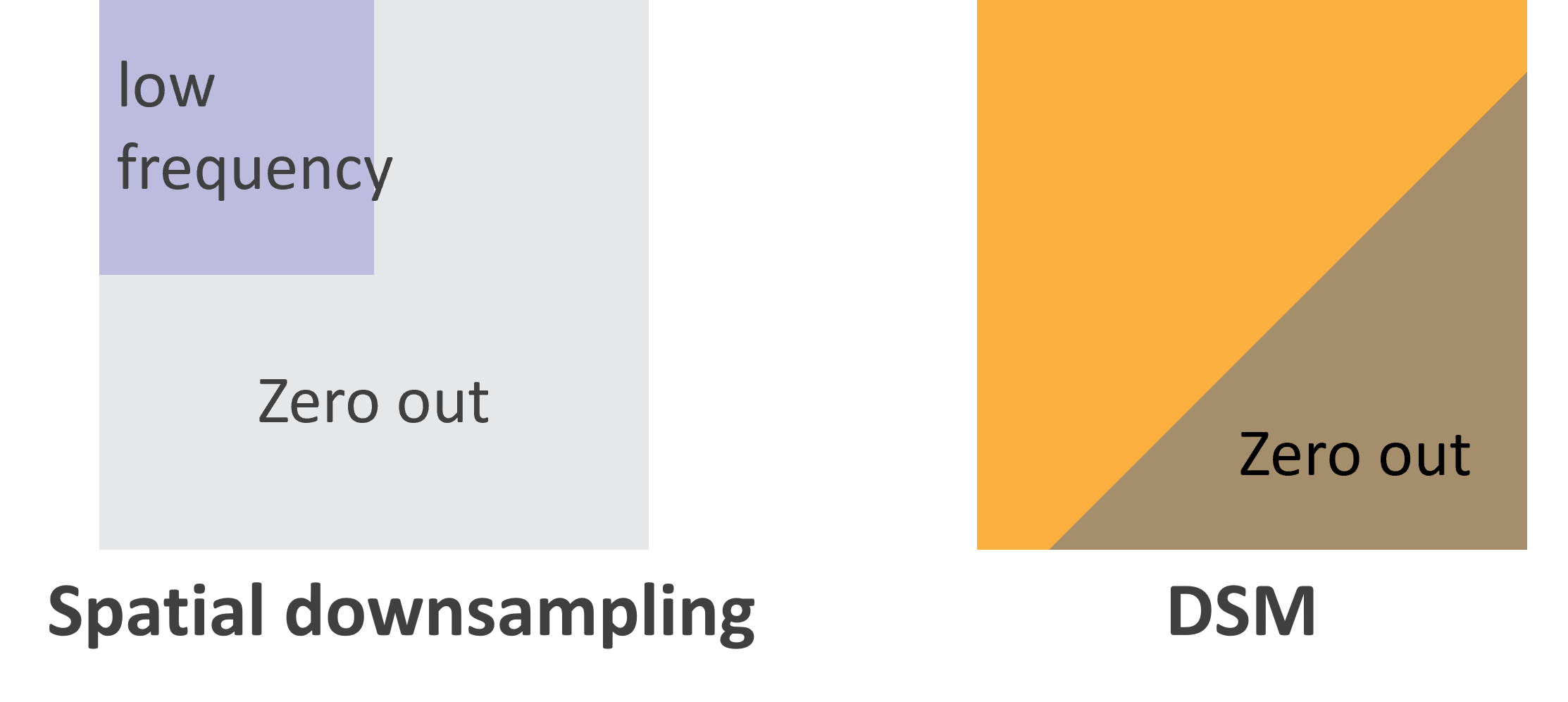}}
\caption{Illustration of the spatial downsampling in the DCT space (left). In the concept of DSM, the frequency mask $M$ can be any shape, even though we instantiate the DSM using a series of triangular masks (right)
}
\label{fig: DCT_upsampling}
\end{center}
\vskip -0.0in
\end{figure}

\subsection{Proof of Proposition~\ref{proposition:2}}
\label{append: Proof of Proposition 2}

\begin{proof}
By definition of $\bar u$ and expanding the square,
\[
\frac{1}{T}\sum_{t=1}^T \|u_t-\bar u\|_2^2
=
\frac{1}{T}\sum_{t=1}^T \|u_t\|_2^2
-\|\bar u\|_2^2.
\]
Since $\|u_t\|_2=1$ for all $t$, the first term equals $1$, hence
\begin{equation}
\label{eq:rmsc_mean_dir_appendix}
\mathrm{RMSC}(x)^2 = 1 - \|\bar u\|_2^2.
\end{equation}

Let $\mathcal{C}\in\mathbb{R}^{T\times T}$ be an orthonormal DCT matrix ($\mathcal{C}^\top\mathcal{C}=I_T$).
Applying it along $t$ to each feature dimension gives coefficient vectors $U_k\in\mathbb{R}^D$.
By Parseval's identity for an orthonormal transform,
\[
\sum_{t=1}^T \|u_t\|_2^2 = \sum_{k=0}^{T-1}\|U_k\|_2^2.
\]
Moreover, the DC coefficient equals the mean up to $\sqrt{T}$:
\[
U_0
=
\sum_{t=1}^T \mathcal{C}_{0,t}\,u_t
=
\frac{1}{\sqrt{T}}\sum_{t=1}^T u_t
=
\sqrt{T}\,\bar u,
\]
so $\|U_0\|_2^2=T\|\bar u\|_2^2$. Since $\sum_{t=1}^T\|u_t\|_2^2=T$, we obtain
\[
\sum_{k=1}^{T-1}\|U_k\|_2^2
=
T-\|U_0\|_2^2
=
T-T\|\bar u\|_2^2
=
T\,\mathrm{RMSC}(x)^2,
\]
where the last step uses equation \eqref{eq:rmsc_mean_dir_appendix}. Dividing by $T$ completes the proof.
\end{proof}

\subsection{Training Parameters}
\label{append: Training Parameters}

We present the complete training parameters of Autoencoders in Table \ref{tab: training parameters}. All models (SD-VAE, ESM-AE, DSM-AE) are trained on CelebA 256$\times$256 using 4 A100 GPUs and on ImageNet 256$\times$256 using 8 A100 GPUs.

In addition, Table \ref{tab: REPA training parameters} shows the training parameters of REPA, iREPA, and REPA-DoG on ImageNet 256$\times$256. Experiments are implemented using 4 A100 GPUs.

\begin{table}[htb]
\caption{
Training parameters of Autoencoders on CelebA 256$\times$256 and ImageNet 256$\times$256}
\label{tab: training parameters}
\centering
\small
\begin{tabular}{@{}lclll|cl@{}}
\toprule
Dataset & \multicolumn{4}{c|}{CelebA 256 (f8d4, f16d16)} & \multicolumn{2}{l}{ImageNet 256 (f16d16)} \\ \midrule
Model & SD-VAE & Scale Equivariance & ESM-AE & DSM-AE & \multicolumn{1}{l}{SD-VAE} & DSM-AE \\ \midrule
network parameters & \multicolumn{4}{c|}{83.7M (f8d4), 64.4M (f16d16)} & \multicolumn{2}{c}{64.4M} \\
VAE learning rate & \multicolumn{4}{c|}{5e-5} & \multicolumn{2}{c}{5e-5} \\
GAN learning rate & \multicolumn{4}{c|}{5e-5} & \multicolumn{2}{c}{5e-5} \\
GAN begin steps & \multicolumn{4}{c|}{50k} & \multicolumn{2}{c}{50k} \\
lr schedule & \multicolumn{4}{c|}{constant with warmup} & \multicolumn{2}{c}{constant with warmup} \\
batch size & \multicolumn{4}{c|}{48} & \multicolumn{2}{c}{128} \\
training steps & \multicolumn{4}{c|}{500k} & \multicolumn{2}{c}{600k} \\
optimizer & \multicolumn{4}{c|}{AdamW} & \multicolumn{2}{c}{AdamW} \\
weight decay & \multicolumn{4}{c|}{0.005} & \multicolumn{2}{c}{0.005} \\
mix precision & \multicolumn{4}{c|}{bf16} & \multicolumn{2}{c}{bf16} \\
$\lambda_1$ (LPIPS loss) & \multicolumn{4}{c|}{0.5} & \multicolumn{2}{c}{0.5} \\
$\lambda_2$ (GAN loss) & \multicolumn{4}{c|}{0.5} & \multicolumn{2}{c}{0.5} \\
$\delta$ (ESM) & \multicolumn{1}{l}{-} & - & 1.0 & - & \multicolumn{1}{l}{-} & - \\
$\beta$ (ESM loss) & \multicolumn{1}{l}{-} & - & 0.01 & - & \multicolumn{1}{l}{-} & - \\
KL loss & \multicolumn{1}{l}{1e-6} & 0 & 0 & 0 & \multicolumn{1}{l}{1e-6} & 0 \\ \bottomrule
\end{tabular}
\end{table}

\begin{table}[htb]
\caption{
Training parameters of Autoencoders on CelebA 256$\times$256 and ImageNet 256$\times$256}
\label{tab: REPA training parameters}
\centering
\small
\begin{tabular}{@{}lcll@{}}
\toprule
Model & \multicolumn{1}{l}{REPA} & iREPA & REPA-DoG \\ \midrule
diffusion network & \multicolumn{3}{c}{SiT-B/2} \\
foundation model & \multicolumn{3}{c}{DINOv2-b} \\
encoder depth & \multicolumn{3}{c}{4} \\
alignment coefficient & \multicolumn{3}{c}{0.5} \\
learning rate & \multicolumn{3}{c}{1e-4} \\
batch size & \multicolumn{3}{c}{256} \\
training steps & \multicolumn{3}{c}{400k} \\
optimizer & \multicolumn{3}{c}{Adam} \\
mix precision & \multicolumn{3}{c}{fp16} \\
FID samples & \multicolumn{3}{c}{50,000} \\
diffusion sampling steps & \multicolumn{3}{c}{100} \\
diffusion sampler & \multicolumn{3}{c}{Euler Maruyama} \\
cfg scale & \multicolumn{3}{c}{1.8} \\ \midrule
projection layer & \multicolumn{1}{l}{MLP} & \multicolumn{2}{c}{Conv} \\
spatial normalization & - & \checkmark & - \\
DoG & - & - & \checkmark \\
normalization std & - & 0.6 & 1.0 \\ \bottomrule
\end{tabular}
\end{table}

\subsection{Design of Frequency Mask $\mathcal{M}$}
\label{append: Design of Frequency Mask}

Considering that the frequency in the DCT space follows a zigzag order (see Figure \ref{fig: mask_design}), where low frequencies lie in the top-left corner and high frequencies are in the bottom right, we design the frequency mask set $\mathcal{M}$ as a series of triangular shapes to perform progressive high-frequency filtering. In detail, we follow the convention of the JPEG codec \cite{wallace1991jpeg} and use the 8$\times$8 DCT block in practice. Thus, the number of diagonal rows $n$ of the mask can uniquely define a specific mask $M\sim \mathcal{M}$. Figure \ref{fig: mask_design} further shows some concrete samples when $n=4$, $n=8$ and $n=12$.

\begin{figure}[htb]
\vskip -0.3in
\begin{center}
\centerline{\includegraphics[width=0.9\columnwidth]{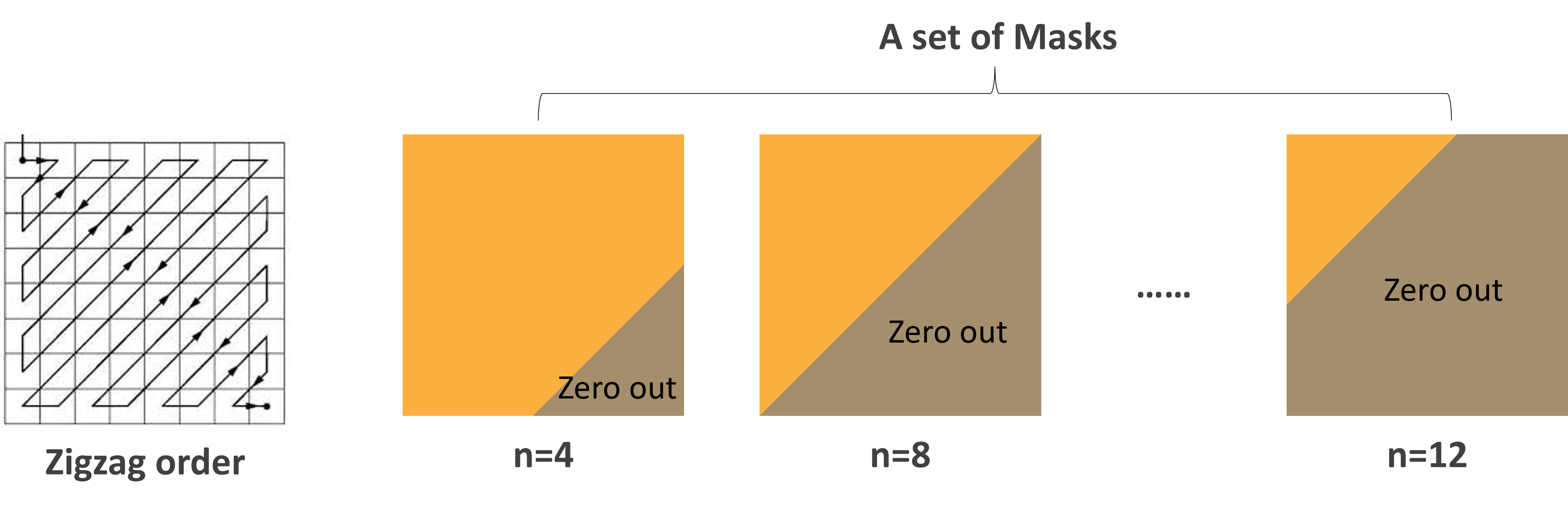}}
\caption{The frequency in the DCT space follows a zigzag order where low frequencies lie in the top-left corner, and high frequencies are in the bottom right. Therefore, we design the frequency mask as a set of triangular shapes to perform different levels of high-frequency filtering. On the right side, we show some concrete samples of the mask $M$ using $n=4$, $n=8$ and $n=12$ in the 8$\times$8 DCT space.
}
\label{fig: mask_design}
\end{center}
\vskip -0.1in
\end{figure}

\subsection{Ablation Studies of ESM and DSM}
\label{append: Ablation Studies of ESM and DSM}

\paragraph{Ablation study of ESM.}
There are two parameters in the ESM regularization: the PSD flatten factor $\delta$ and the ESM loss weight $\beta$. We perform the ablation study in terms of $\delta$ and $\beta$. Experimental results in the Table \ref{tab: ablation_of_ESM} indicates that $\delta=1.0$ and $\beta=0.01$ yield the best diffusability on CelebA 256 $\times$ 256 dataset. The optimal $\delta=1.0$ is consistent with the DINOv2 feature spectrum we have analyzed in \ref{append: Analysis of Low Frequency Alignment in UAE})

\begin{table}[htb]
\caption{
Ablation study of ESM-AE on CelebA 256$\times$256, 'diff steps’ denotes the diffusion training step at convergence.}
\label{tab: ablation_of_ESM}
\centering
\small
\begin{tabular}{@{}llllllll@{}}
\toprule
\multicolumn{1}{c}{\multirow{2}{*}{Models}} & \multirow{2}{*}{$\delta$} & \multirow{2}{*}{$\beta$} & \multicolumn{5}{c}{f16d16} \\ \cmidrule(l){4-8} 
\multicolumn{1}{c}{} &  &  & rFID $\downarrow$ & PSNR $\uparrow$ & SSIM $\uparrow$ & diff steps & gFID $\downarrow$ \\ \midrule
ESM-AE & 0.6 & 0.01 & 1.56 & 30.52 & 0.767 & 100k & 5.90 \\
ESM-AE & 1.0 & 0.01 & 1.56 & 31.30 & 0.782 & 100k & \textbf{5.84} \\
ESM-AE & 1.4 & 0.01 & 1.61 & 31.62 & 0.790 & 100k & 6.57 \\ \midrule
ESM-AE & 1.0 & 0.1 & 1.63 & 30.86 & 0.777 & 100k & 6.82 \\
ESM-AE & 1.0 & 0.01 & 1.56 & 31.30 & 0.782 & 100k & \textbf{5.84} \\
ESM-AE & 1.0 & 0.001 & 1.82 & 31.19 & 0.781 & 100k & 6.00 \\ \bottomrule
\end{tabular}
\end{table}

\paragraph{Ablation study of DSM.}
There is only one hyperparameter in the DSM regularization: the design of the DCT frequency mask set $\mathcal{M}$. We test various designs of $\mathcal{M}$, in which $n=\{8, 12\}$ means $\mathcal{M}$ consists of two mask samples and they remove $8$ and $12$ diagonal rows of frequencies, respectively. Moreover, we maintain the mask sample $n=0$ in $\mathcal{M}$ to let the DSM-AE train the original input image. The results on Table \ref{tab: ablation_of_DSM} reveal that a proper element density in the mask set $\mathcal{M}$ leads to the optimal diffusability on CelebA 256 $\times$ 256. This phenomenon is consistent with our expectation: too few elements in the mask set $\mathcal{M}$ lead to a weak DSM constraint, while too many mask elements in $\mathcal{M}$ would reduce the chance of training the original image ($n=0$)

\begin{table}[htb]
\caption{
Ablation study of DSM-AE on CelebA 256$\times$256, 'diff steps’ denotes the diffusion training step at convergence.}
\label{tab: ablation_of_DSM}
\centering
\small
\begin{tabular}{@{}lllllll@{}}
\toprule
\multicolumn{1}{c}{\multirow{2}{*}{Models}} & \multirow{2}{*}{Mask set $\mathcal{M}$} & \multicolumn{5}{c}{f16d16} \\ \cmidrule(l){3-7} 
\multicolumn{1}{c}{} &  & rFID $\downarrow$ & PSNR $\uparrow$ & SSIM $\uparrow$ & diff steps & gFID $\downarrow$ \\ \midrule
DSM-AE & n=\{0, 8, 12\} & 1.40 & 29.64 & 0.749 & 100k & 5.02 \\
DSM-AE & n=\{0, 8, 10, 12\} & 1.39 & 29.32 & 0.745 & 100k & \textbf{4.49} \\
DSM-AE & n=\{0, 6, 8, 10, 12\} & 1.38 & 29.52 & 0.746 & 100k & 4.61 \\
DSM-AE & n=\{0, 8, 10, 12, 13\} & 1.23 & 28.71 & 0.728 & 100k & 5.33 \\ \bottomrule
\end{tabular}
\end{table}

\end{document}